\title{Thompson Sampling for 1-Dimensional\\ Exponential Family Bandits}
\author{Nathaniel Korda, Emilie Kaufmann, and R\'emi Munos \\ \\ \small{Telecom Paristech UMR CNRS 5141 \& INRIA Lille - Nord Europe}}
\newcommand{\K}{\text{K}}
\begin{document}

\maketitle

\begin{abstract}
Thompson Sampling has been demonstrated in many complex bandit models, however the theoretical guarantees available for the parametric multi-armed bandit are still limited to the Bernoulli case. Here we extend them by proving asymptotic optimality of the algorithm using the Jeffreys prior for $1$-dimensional exponential family bandits. Our proof builds on previous work, but also makes extensive use of closed forms for Kullback-Leibler divergence and Fisher information (and thus Jeffreys prior) available in an exponential family. This allow us to give a finite time exponential concentration inequality for posterior distributions on exponential families that may be of interest in its own right. Moreover our analysis covers some distributions for which no optimistic algorithm has yet been proposed, including heavy-tailed exponential families.
\end{abstract}

\section{Introduction}
\label{sec:intro}
$K$-armed bandit problems provide an elementary model for exploration-exploitation tradeoffs found at the heart of many online learning problems. In such problems, an agent is presented with $K$ distributions (also called arms, or actions) $\{p_a\}_{a=1}^K$, from which she draws samples interpreted as rewards she wants to maximize. This objective induces a trade-off between choosing to sample a distribution that has already yielded high rewards, and choosing to sample a relatively unexplored distribution at the risk of loosing rewards in the short term. Here we make the assumption that the distributions, $p_a$, belong to a parametric family of distributions $\cP = \{p(\cdot\mid\theta),\theta\in\Theta\}$ where $\Theta\subset \R$. The bandit model is described by a parameter $\theta_0=(\theta_1,\dots,\theta_K)$ such that $p_a=p(\cdot\mid\theta_a)$. We introduce the mean function $\mu(\theta)=\bE_{X\sim p(\cdot\mid\theta)}[X]$, and the optimal arm $\theta^*=\theta_{a^*}$ where $a^*=\text{argmax}_a \ \mu(\theta_a)$.
\par \smallskip
An algorithm, $\phi$, for a $K$-armed bandit problem is a (possibly randomised) method for choosing which distribution to sample from next, given a history of previous arm choices and obtained rewards, $\cH_{t-1}:=((a_s,x_s))_{s=1}^{t-1}$: each reward $x_s$ is drawn from the distribution $p_{a_s}$. We denote by $\phi_t$ the distribution over $\{1,\dots,K\}$ induced by the history $\cH_{t-1}$: at time $t$ the agent using $\phi$ picks arm $a$ with probability $\phi_{t}(a)$. The agent's goal is to design an algorithm with low regret:
\begin{align*}
\cR(\phi,t)=\cR(\phi,t)(\theta) := t\mu(\theta^*) - \bE_{\phi}\left[\sum_{s=1}^t x_s\right].
\end{align*}
This quantity measures the expected performance of algorithm $\phi$ compared to the expected performance of an optimal algorithm given knowledge of the reward distributions, i.e. sampling always from the distribution with the highest expectation. 
\par \smallskip
Since the early 2000s the ``optimisim in the face of uncertainty" heuristic has been a popular approach to this problem, providing both simplicity of implementation and finite-time upper bound on the regret (e.g. \cite{AuerEtAl02FiniteTime,KLUCB:Journal}).
However in the last two years there has been renewed interest in the Thompson Sampling heuristic (TS).
While this heuristic was first put forward to solve bandit problems eighty years ago in \cite{Thompson33}, it was not until recently that theoretical analyses of its performance were achieved \cite{Agrawal:GoyalCOLT2012,Agrawal:GoyalAISTATS2013,KaufmannKordaMunosALT2012,MayKordaOBS}.
In this paper we take a major step towards generalising these analyses to the same level of generality already achieved for ``optimistic" algorithms.

\paragraph{Thompson Sampling} Unlike optimistic algorithm which are often based on confidence intervals, the Thompson Sampling algorithm $\phi^{TS,\pi_0}$ uses Bayesian tools and puts a prior distribution $\pi_{a,0}=\pi_0$ on each $\theta_a$. A posterior distribution, $\pi_{a,t}$, is then maintained according to the rewards observed in $\cH_{t-1}$.
At each time a sample $\theta_{a,t}$ is drawn from each posterior $\pi_{a,t}$ and then the algorithm chooses to sample  $a_t=\arg \max_{a\in\{1,\dots,K\}}\{\mu(\theta_{a,t})\}$. Therefore $\phi^{TS,\pi_0}_{t}(a)$ is the posterior probability that $a = a^*$ given the history $\cH_{t-1}$. 

\paragraph{Our Contributions}
TS has proved to have impressive empirical performances, very close to those of state of the art algorithms such as DMED and KL-UCB \cite{KaufmannKordaMunosALT2012,HondaTakemura10DMED,KLUCB:Journal}. Furthermore recent works \cite{KaufmannKordaMunosALT2012,Agrawal:GoyalAISTATS2013} have shown that in the special case where each $p_a$ is a Bernoulli distribution $\cB(\theta_a)$, TS using a uniform prior over the arms is asymptotically optimal in the sense that it achieves the asymptotic lower bound on the regret provided by Lai and Robbins in \cite{LaiRobbins85bandits} (that holds for univariate parametric bandits). In this paper, we show this optimality property also holds for $1$-dimensional exponential families if the algorithm uses the Jeffrey's prior:
\begin{theorem}
Suppose that the rewards distributions belong to a $1$-dimensional canonical exponential family 
and that $\pi_J$ is the Jeffrey's prior. Then,
\begin{align}\label{eq:as-opt}
\lim_{T\ra \infty} \frac {\cR(\phi^{TS,\pi_J},T)}{\ln T} = \sum_{a = 1}^K \frac{\mu(\theta_{a^*})-\mu(\theta_a)}{\K(\theta_a, \theta_{a^*})},
\end{align}
where $\K(\theta,\theta'):=\text{KL}(p_\theta,p_\theta')$ is the Kullback-Leibler divergence between $p_\theta$ and $p_\theta'$.
\end{theorem}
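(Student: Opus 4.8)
The plan is to prove a matching upper bound on the regret, since the lower bound is already provided by Lai and Robbins. The standard decomposition of regret in terms of arm pull counts gives
\[
\cR(\phi^{TS,\pi_J},T) = \sum_{a\neq a^*} (\mu(\theta_{a^*})-\mu(\theta_a))\,\bE[N_a(T)],
\]
where $N_a(T)$ is the number of times suboptimal arm $a$ is pulled up to time $T$. Thus it suffices to show that for each suboptimal arm, $\bE[N_a(T)] \leq \frac{\ln T}{\K(\theta_a,\theta_{a^*})}(1+o(1))$. **First I would** reduce to a two-armed analysis (optimal arm $a^*$ versus a fixed suboptimal arm $a$), as is standard in the Thompson Sampling literature following \cite{KaufmannKordaMunosALT2012,Agrawal:GoyalAISTATS2013}, controlling the probability that the posterior sample $\mu(\theta_{a,t})$ overtakes $\mu(\theta_{a^*,t})$.

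**The core of the argument** splits $\bE[N_a(T)]$ according to two complementary failure modes. The first is that the optimal arm's posterior sample $\mu(\theta_{a^*,t})$ is \emph{underestimated}, falling below a threshold between $\mu(\theta_a)$ and $\mu(\theta_{a^*})$; one shows this happens only $O(1)$ times in expectation, using the posterior concentration inequality advertised in the abstract together with the fact that $a^*$ is pulled linearly often. The second mode is that the suboptimal arm's sample $\mu(\theta_{a,t})$ is \emph{overestimated} above that same threshold despite the empirical mean of arm $a$ concentrating near $\mu(\theta_a)$; here the closed-form expressions for the Kullback-Leibler divergence and the Fisher information in the exponential family, and hence for the Jeffreys posterior density, let me bound the posterior tail probability explicitly. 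Summing the geometric-type bound on the number of pulls needed before the optimal arm's posterior is well-separated yields precisely the constant $1/\K(\theta_a,\theta_{a^*})$, matching the Lai--Robbins rate.

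**The main obstacle** I expect is obtaining sharp, non-asymptotic control of the Jeffreys posterior tails uniformly across the whole exponential family — in particular handling the heavy-tailed members flagged in the abstract, where the usual sub-Gaussian or bounded-reward concentration arguments fail. The Jeffreys prior, being proportional to $\sqrt{I(\theta)}$, is improper or delicate near the boundary of $\Theta$, so I would need to verify that the posterior remains well-defined after a single observation and that its mass cannot escape to the boundary; the closed forms for $I(\theta)$ and for $\K$ are exactly the tool that makes this tractable, converting posterior-tail estimates into Chernoff-type bounds involving $\K$. A secondary difficulty is ensuring the leading constant is \emph{tight} rather than merely of the right order: this requires a careful choice of the separating threshold tending to $\mu(\theta_{a^*})$ slowly, together with a self-normalized deviation inequality for the empirical process driving the posterior, so that the $o(1)$ correction genuinely vanishes as $T\to\infty$.
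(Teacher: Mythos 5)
Your overall architecture coincides with the paper's: the regret is reduced to bounding $\bE[N_{a,T}]$ for each suboptimal arm, and this count is split into (i) the suboptimal arm's posterior sample being overestimated while its empirical sufficient statistic is concentrated --- handled by an explicit Chernoff-type posterior tail bound built from the closed forms of $\K$ and the Fisher information (the paper's Theorem \ref{TheoremPostConc} and Lemma \ref{LemmaSubOptArmSampleTerm}, which produce the $\ln T/\K(\theta_a,\theta_{a^*})$ term) --- and (ii) the optimal arm's posterior sample being underestimated, shown to contribute $O(1)$. Your device of making the improper Jeffreys posterior proper via a single observation is exactly the paper's event $\tilde E_{a,t}$, which singles out one ``likely'' reward $y_{s'}$ and conditions the prior on it. One cosmetic difference: the paper does not let the separating threshold drift toward $\mu(\theta_{a^*})$ with $T$; it fixes $\Delta_a$ close to $\mu_1-\mu_a$ and sends an auxiliary $\epsilon$ to zero only at the very end.

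The genuine gap is in your treatment of the underestimation mode. You invoke ``the fact that $a^*$ is pulled linearly often'' as an input to the posterior concentration argument, but this is not a fact you may assume --- it is the central difficulty of the entire analysis, and the paper devotes Proposition \ref{PropositionNumOptPla} and its appendix to it. The posterior concentration bound for the optimal arm only has teeth once $N_{1,t}$ is large, and establishing $\bP(N_{1,t}\le t^b)$ summable requires a separate, delicate argument: decompose the gap between consecutive optimal plays into $K$ subintervals, track which suboptimal arms are ``saturated'' (played at least $C_a\ln t$ times), run an induction over interruptions, and crucially use a Bernstein--von Mises-type statement (Lemma \ref{BVM}) to show that the optimal arm's posterior, resampled i.i.d.\ between its own plays, cannot stay below $\mu_2+d_2$ for a long stretch. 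As written, your plan is circular at this point: you need many optimal pulls to conclude that the optimal arm's posterior concentrates, and you need that concentration to conclude that the optimal arm is pulled often. You correctly identify posterior tail control for heavy-tailed families as an obstacle, but the harder obstacle --- and the one your sketch does not address --- is this bootstrap on $N_{1,t}$.
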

This theorem follows directly from Theorem \ref{thm:main}. In the proof of this result we provide in Theorem \ref{TheoremPostConc} a finite-time, exponential concentration bound for posterior distributions of exponential family random variables, something that to the best of our knowledge is new to the literature and of interest in its own right. Our proof also exploits the explicit connection between the Jeffreys prior, Fisher information and the Kullback-Leibler divergence in exponential families.

\paragraph{Related Work}
Another line of recent work has focused on distribution-independent bounds for Thompson Sampling. \cite{Agrawal:GoyalAISTATS2013} establishes that $\cR(\phi^{TS,\pi_U},T)=O(\sqrt{KT\ln(T)})$ for Thompson Sampling for bounded rewards (with the classic uniform prior on the underlying Bernoulli parameter). \cite{RussoVonRoy:13} go beyond the Bernoulli model, and give an upper bound on the Bayes risk (i.e. the regret averaged over the prior) independent of the prior distribution. For the parametric multi-armed bandit with $K$ arms described above, their result states that the regret of Thompson Sampling using a prior $\pi_0$  is not too big when averaged over this same prior:
 $$\bE_{\theta\sim {\pi_0^{\otimes K}}}[\cR(\phi^{TS,\pi_0},T)(\theta)]\leq 4 + K + 4\sqrt{KT\log(T)}.$$ 
Building on the same ideas, \cite{BubeckLiu:13} have improved this upper bound to $14\sqrt{KT}$. In our paper, we rather see the prior used by Thompson Sampling as a tool, and we want therefore to obtain garantees for any given problem parametrized by $\theta$.  
\par \smallskip
\cite{RussoVonRoy:13} also use Thompson Sampling in more general models, like the linear bandit model. Their result is a bound on the Bayes risk that does not depend on the prior, whereas Agrawal and Goyal give in \cite{Agrawal:GoyalICML2013} a first regret bound for this model. Linear bandits consider a possibly infinite number of arms whose mean rewards are linearly related by a single, unknown coefficient vector. Once again, the analysis in \cite{Agrawal:GoyalICML2013} encounters the problem of describing the concentration of posterior distributions. However by using a conjugate normal prior, they can employ explicit the concentration bounds available for Normal distributions to complete their argument.

\paragraph{Paper Structure} In Section \ref{sec:prelims} we describe important features of the one-dimensional canonical exponential families we consider, including closed-form expression for KL-divergences and the Jeffrey's prior. Section \ref{sec:result} gives statements of the main results, and provides the proof of the regret bound. Section \ref{sec:proofs} proves the posterior concentration result used in the proof of the regret bound.

\section{Exponential Families and Jeffreys Priors}
\label{sec:prelims} 
A distribution is said to belong to a one-dimensional canonical exponential family if it has a density with respect to some reference measure $\nu$ of the form:
\begin{equation}
p(x \mid \theta) = A(x)\exp(T(x)\theta - F(\theta)),
\label{ExpFam}
\end{equation}
where $\theta\in\Theta\subset \R$. $T$ and $A$ are some fixed functions that characterize the exponential family and  
$F(\theta)=\log\left(\int A(x)\exp\left[T(x)\theta\right] d\lambda(x)\right)$. $\Theta$ is called the \emph{parameter space}, $T(x)$ the \emph{sufficient statistic}, and $F(\theta)$ the \emph{normalisation function}. We make the classic assumption that $F$ is twice differentiable with a continuous second derivative. It is well known \cite{AllOfStats} that: 
 $$ \bE_{X\mid \theta}(T(X)) =  F'(\theta)  \ \ \ \ \text{and} \ \ \ \ \ \  \ \text{Var}_{X|\theta}[T(X)] = F''(\theta)$$

showing in particular that $F$ is strictly convex. The mean function $\mu$ is differentiable and stricly increasing, since we can show that 
$$\mu'(\theta)= \text{Cov}_{X|\theta}(X,T(X))>0.$$
In particular, this shows that $\mu$ is one-to-one in $\theta$. 
\paragraph{KL-divergence in Exponential Families}
In an exponential family, a direct computation show that the Kullback-Leibler divergence can be expressed as a Bregman divergence of the normalisation function, F:
\begin{equation}
\K(\theta,\theta') = D^B_F (\theta',\theta)
:= F(\theta') - \left[ F(\theta) + F'(\theta)(\theta' - \theta)\right].\label{KLEXP}
\end{equation}

\paragraph{Jeffreys prior in Exponential Families}
In the Bayesian literature, a special ``non-informative" prior, one which is invariant under re-parametrisation of the parameter space, is sometimes considered.  It is called the Jeffrey's prior, and it can be shown to be proportional  to the square-root of the Fisher information $I(\theta)$. In the special case of the canonical exponential family, the Fisher information takes the form $I(\theta) = F''(\theta)$, hence the Jeffrey's prior for the model \eqref{ExpFam} is 
$$\pi_J(\theta) \propto \sqrt{\left|F''(\theta) \right|}.$$
Under the Jeffrey's prior, the posterior on $\theta$ after $n$ observations is given by 
\begin{equation}p(\theta | y_1,\dots y_n) \propto \sqrt{F''(\theta)}\exp\left(\theta \sum_{i=1}^{n} T(y_i) - n F(\theta_i)\right)
\label{Posterior}
\end{equation}
When  $\int_{\Theta}\sqrt{F''(\theta)}d\theta < +\infty$, the prior is called \textit{proper}. However, stasticians often use priors which are not proper: the prior is called \textit{improper} if $\int_{\Theta}\sqrt{F''(\theta)}d\theta = +\infty$ and any observation makes the corresponding posterior \eqref{Posterior} integrable.

\begin{figure}[t]\label{Examples}
$$
\begin{array}{|c|c|c|c|c|}
\hline
\text{Name}  & \text{Distribution}      & \theta          & \text{Prior on} \  \lambda & \text{Posterior on} \  \lambda \\
\hline
\mathcal{B}(\lambda) & \lambda^x(1-\lambda)^{1-x} \delta_{0,1} & \log\left(\frac{\lambda}{1-\lambda}\right)  & \text{Beta}\left(\frac{1}{2},\frac{1}{2}\right) & \text{Beta}\left(\frac{1}{2} + s,\frac{1}{2} + n-s\right) \\
\hline
\mathcal{N}(\lambda,\sigma^2) & \frac{1}{\sqrt{2\pi \sigma^2}}e^{-\frac{(x-\lambda)^2}{2\sigma^2}} & \frac{\lambda}{\sigma^2}  & \propto 1 & \mathcal{N}\left(\frac{s}{n},\frac{\sigma^2}{n}\right) \\
\hline
\Gamma(k,\lambda) & \frac{\lambda^k}{\Gamma(k)}x^{k-1}e^{-\lambda x} 1_{[0,+\infty[}(x) & -\lambda & \propto \frac{1}{\lambda} & \Gamma(k n, s) \\
\hline
\mathcal{P}(\lambda) & \frac{\lambda^x e^{-\lambda}}{x!}\delta_\N & \log(\lambda) & \propto \frac{1}{\sqrt{\lambda}} & \Gamma\left(\frac{1}{2} + s, n\right) \\
\hline 
\text{Pareto}(x_{m},\lambda) & \frac{\lambda x_{m}^\lambda}{x^{\lambda +1}} 1_{[x_m,+\infty[}(x) & -\lambda - 1  & \propto \frac{1}{\lambda} & \Gamma\left( n + 1, s - n\log x_m \right) \\
\hline
\text{Weibull}(k,\lambda) & k\lambda (x\lambda)^{k-1}e^{-(\lambda x)^k}1_{[0,+\infty[} & -\lambda^k & \propto \frac{1}{\lambda^k} & 
\alpha \lambda^{(n-1)k}\exp(-\lambda^k s) \\
\hline
\end{array} $$
\caption{The posterior distribution after observations $y_1,\dots,y_n$ depends on $n$ and $s=\sum_{i=1}^{n}T(y_i)$}
\end{figure} 

\paragraph{Some Intuition for choosing the Jeffreys Prior} In the proof of our concentration result for posterior distributions (Theorem \ref{TheoremPostConc}) it will be crucial to lower bound the prior probability of an $\epsilon$-sized KL-divergence ball around each of the parameters $\theta_a$. Since the Fisher information $F''(\theta) = \lim_{\theta'\rightarrow \theta}K(\theta,\theta')/|\theta-\theta'|^2$, choosing a prior proportional to $F''(\theta)$ ensures that the prior measure of such balls are $\Omega(\sqrt{\epsilon})$.

\paragraph{Examples and Pseudocode} Algorithm \ref{Algo} presents pseudocode for Thompson Sampling with the Jeffreys prior for distributions parametrized by their natural parameter $\theta$. But as the Jeffreys prior is invariant under reparametrization, if a distribution is parametrised by some parameter $\lambda\not\equiv\theta$, the algorithm can use the Jeffrey's prior $\propto \sqrt{I(\lambda)}$ on $\lambda$, drawing samples from the posterior on $\lambda$. Note that the posterior sampling step (in bold) is always tractable using, for example, a Hastings-Metropolis algorithm.

Some examples of common exponential family models are given in Figure \ref{Examples}, together with the posterior distributions on the parameter $\lambda$ that is used by TS with Jeffreys prior. In addition to examples already studied in \cite{KLUCB:Journal} for which $T(x)=x$, we also give two examples of more general canonical exponential families, namely the Pareto distribution with known min value and unknown tail index $\lambda$, $\text{Pareto}(x_m,\lambda)$, for which $T(x)=\log(x)$, and the Weibul distribution with known shape and unknown rate parameter, $\text{Weibull}(k,\lambda)$, for which $T(x)=x^k$. These last two distributions are not covered even by the work in \cite{AOKLUCB}, and belong to the family of heavy-tailed distributions.

For the Bernoulli model, one note futher that the use of the Jeffreys prior is not covered by the previous analyses. These analyses make an extensive use of the uniform prior, through the fact that the coefficient of the Beta posteriors they consider have to be integers.

\begin{algorithm}[t]
\caption{Thompson Sampling for Exponential Families with Jeffrey's prior}
\begin{algorithmic} \label{Algo}
\REQUIRE $F$ normalization function, $T$ sufficient statistic, $\mu$ mean function 
\FOR{$t=1 \dots K$}
\STATE Sample arm $t$ and get rewards $x_t$
\STATE $N_t=1$, $S_t=T(x_t)$. 
\ENDFOR
\FOR{ $t=K+1 \dots n$}
\FOR{ $a=1\dots K$}
\STATE \textbf{Sample $\theta_{a,t}$ from $\pi_{a,t}\propto \sqrt{F''(\theta)}\exp\left( \theta S_a - N_a F(\theta)\right)$}
\ENDFOR
\STATE Sample arm $A_t=\text{argmax}_a \mu(\theta_{a,t})$ and get reward $x_t$ 
\STATE $S_{A_t}=S_{A_t}+T(x_t)$ \ $N_{A_t} = N_{A_t} +1$
\ENDFOR
\end{algorithmic}
\end{algorithm}

\section{Results and Proof of Regret Bound}
\label{sec:result}
An \emph{exponential family $K$-armed bandit} is a $K$-armed bandit for which the reward distributions $p_a$ are known to be elements of an exponential family of distributions $\cP(\Theta)$. We denote by $p_{\theta_a}$ the distribution of arm $a$ and its mean by $\mu_a=\mu(\theta_a)$.

\begin{theorem}[\textbf{Regret Bound}]\label{thm:main}
Assume that $\mu_1>\mu_a$ for all $a\neq 1$, and that $\pi_{a,0}$ is taken to be the Jeffrey's prior over $\Theta$. Then for every $\epsilon>0$ there exists a constant $\cC(\epsilon,\cP)$ depending on $\epsilon$ and on the problem $\mathcal{P}$ such that the regret of Thompson Sampling using the Jeffrey's prior satisfies
\begin{align*}
\cR(\phi^{TS,\pi_J},T) \leq \frac{1+\epsilon}{1-\epsilon}\left(\sum_{a=2}^{K}\frac{(\mu_1-\mu_a)}{\K(\theta_a,\theta_1)}\right)\ln(T) + \mathcal{C}(\epsilon,\mathcal{P}).
\end{align*}
\end{theorem}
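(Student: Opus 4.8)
The plan is to start from the standard regret decomposition $\cR(\phi^{TS,\pi_J},T) = \sum_{a=2}^K (\mu_1 - \mu_a)\,\bE[N_a(T)]$, where $N_a(T)$ denotes the number of pulls of arm $a$ up to time $T$, so that it suffices to prove $\bE[N_a(T)] \leq \frac{1+\epsilon}{1-\epsilon}\frac{\ln T}{\K(\theta_a,\theta_1)} + O(1)$ for each suboptimal arm $a$. Writing $\theta_{a,t}$ for the posterior sample of arm $a$ at round $t$, I would fix an intermediate mean level $\mu_a < x < \mu_1$ (with $\theta_x := \mu^{-1}(x)$) and split each pull of $a$ according to whether its own sample overshoots $x$. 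Since $\{A_t=a\}$ forces $\mu(\theta_{1,t}) \le \mu(\theta_{a,t})$, this yields a decomposition into an \emph{over-sampling} term counting rounds with $A_t=a$ and $\mu(\theta_{a,t}) > x$, and an \emph{under-sampling} term counting rounds with $A_t=a$ and $\mu(\theta_{1,t}) \le x$.

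For the over-sampling term, Theorem \ref{TheoremPostConc} shows that the posterior of arm $a$ concentrates at an exponential rate governed by $\K(\theta_a,\theta_x)$, so that the posterior probability of $\{\mu(\theta_{a,t}) > x\}$ becomes summable once $N_a(t)$ exceeds roughly $\frac{\ln T}{(1-\epsilon)\K(\theta_a,\theta_x)}$. Capping $N_a(t)$ at this level, the rounds below the cap contribute at most that many pulls, and choosing $x$ close enough to $\mu_1$ lets continuity of $\theta'\mapsto \K(\theta_a,\theta')$ replace $\K(\theta_a,\theta_x)$ by $\K(\theta_a,\theta_1)$ up to an arbitrarily small multiplicative $(1+\epsilon)$ factor; together these produce the leading term $\frac{1+\epsilon}{1-\epsilon}\frac{\ln T}{\K(\theta_a,\theta_1)}$. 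The rounds above the cap contribute only a convergent, hence $O(1)$, sum. Here the closed form \eqref{KLEXP} for $\K$ and the Jeffreys prior $\propto\sqrt{F''}$ enter through the explicit constants in Theorem \ref{TheoremPostConc}.

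The delicate part — and the main obstacle — is the under-sampling term, which measures how often the \emph{optimal} arm's posterior sample falls below $x$. Following the approach used for the Bernoulli case, I would control it by conditioning on the successive pull times $\tau_j$ of arm $1$: between two consecutive pulls the posterior $\pi_{1,t}$ is frozen, so the number of rounds with $A_t=a$ and $\mu(\theta_{1,t})\le x$ can be compared to a geometric variable and bounded by $\sum_{j\ge 0}\bE[1/p_j - 1]$, where $p_j$ is the (random) posterior mass that arm $1$'s distribution, after $j$ observations, places on $\{\mu(\theta)>x\}$. The crux is to show this series converges to a problem-dependent constant, which requires two facts about arm $1$'s posterior: a lower bound on $p_j$ valid even for small $j$ — exactly where the lower bound on the prior mass of KL-balls afforded by the Jeffreys prior (the $\Omega(\sqrt{\epsilon})$ estimate sketched above) is essential — and, for large $j$, the upper-tail control from Theorem \ref{TheoremPostConc} forcing $p_j \to 1$ fast enough that $\bE[1/p_j-1]$ decays summably. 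Establishing these estimates uniformly, using only the closed forms available in the exponential family, is where the real work lies; collecting the three pieces and summing $(\mu_1-\mu_a)\bE[N_a(T)]$ over $a$ then gives the stated bound, with $\mathcal{C}(\epsilon,\mathcal{P})$ absorbing all the $O(1)$ contributions.
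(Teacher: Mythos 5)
Your regret decomposition and your treatment of the over-sampling term follow the paper's proof closely: the paper's term $(B)$ and Lemma \ref{LemmaSubOptArmSampleTerm} implement exactly the capping argument you describe, and Step 4 of the paper performs the same continuity argument to trade $\K(\theta_a,\mu^{-1}(x))$ for $\K(\theta_a,\theta_1)$ at the cost of a $(1+\epsilon)$ factor. One omission there: Theorem \ref{TheoremPostConc} bounds the posterior overshoot probability only \emph{on the event} $\tilde E_{a,t}$ that the empirical sufficient statistic of arm $a$ is concentrated and a ``likely'' observation has been seen, whereas you apply it unconditionally. You need a third term in your decomposition for $\tilde E_{a,t}^c$, which the paper controls via the Chernoff bound of Lemma \ref{LemmaSuffStatConc} through inequality \eqref{FirstUseLemma}.

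The serious gap is the under-sampling term. You propose the Agrawal--Goyal route: freeze arm $1$'s posterior between its pulls, compare the waiting time to a geometric variable, and bound the term by $\sum_{j}\bE[1/p_j-1]$, where $p_j$ is the posterior mass of $\{\mu(\theta)>x\}$ after $j$ observations of arm $1$. The crux, as you acknowledge, is $\sum_j \bE[1/p_j-1]<\infty$; but this is precisely the estimate that is \emph{not} available outside the Bernoulli/Beta case, where it is obtained by explicit Beta--Binomial computations. The Jeffreys-prior lower bound on the mass of KL-balls controls $\pi_0\left(B_\epsilon(\theta_1)\right)$, a neighbourhood of the true parameter; it says nothing about the expectation of the \emph{inverse} of a posterior tail probability over the randomness of the first $j$ rewards --- $1/p_j$ can blow up on realizations where the early rewards from arm $1$ are atypically low, and nothing in the paper's toolbox bounds $\bE[1/p_j]$ for general exponential families. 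This is exactly why the paper abandons that route and instead proves Proposition \ref{PropositionNumOptPla}, namely $\sum_t\bP(N_{1,t}\le t^b)\le C_b$, via the interval-decomposition and saturation argument of Kaufmann--Korda--Munos; the only posterior input needed there is the much weaker, asymptotic statement of Lemma \ref{BVM} that $\bP(\mu(\theta_{1,\tau_j})\le\mu_2+d_2)\le C<1$ for some non-explicit $C$. Once $N_{1,t}>t^b$ holds with summable failure probability, the under-sampling term reduces to a posterior-concentration statement for arm $1$ (terms $(B')$ and $(C')$ in the paper), handled again by Theorem \ref{TheoremPostConc} and inequality \eqref{SndUseLemma}. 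As written, your proposal leaves the one genuinely hard step unproved, and the natural way to close it is to replace the geometric comparison by Proposition \ref{PropositionNumOptPla}.
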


\paragraph{Proof:} We give here the main argument of the proof of the regret bound, which proceed by bounding the expected number of draws of any suboptimal arm. Along the way we shall state concentration results whose proofs are postponed to later sections.
\paragraph{Step 0: Notation} We denote by $y_{a,s}$ the $s$-th observation of arm $a$ and by $N_{a,t}$ the number of times arm $a$ is chosen up to time $t$. $(y_{a,s})_{s\ge 1}$ is i.i.d. with distribution $p_{\theta_a}$.
Let $Y_{a}^u:=(y_{a,s})_{1\leq s \leq u}$ be the vector of first $u$ observations from arm $a$. $Y_{a,t}:=Y_{a}^{N_{a,t}}$ is therefore the vector of observations from arm $a$ available at the beginning of round $t$. Recall that
$\pi_{a,t}$, respectively $\pi_{a,0}$, is the posterior, respectively the prior, on $\theta_a$ at round $t$ of the algorithm. 

We let $L(\theta):=\frac{1}{2}\min(\sup_y p(y | \theta) , 1)$. For any $\delta_a>0$, we introduce the event $\tilde{E}_{a,t} = \tilde{E}_{a,t}(\delta_a)$: 
\begin{align}\label{EventU}
\tilde{E}_{a,t} = \left(\exists 1 \leq s' \leq N_{a,t} :p(y_{a,s'}|\theta_a) \geq L(\theta_a), \left|\frac{\sum_{s=1,s\neq s'}^{N_{a,t}} T(y_{a,s})}{N_{a,t} -1} - F'(\theta_a)\right| \leq \delta_a  \right).
\end{align}
For all $a\neq 1$ and $\Delta_a$ such that $\mu_a < \mu_a+\Delta_a < \mu_1$, we introduce   
$$E_{a,t}^{\theta} = E_{a,t}^{\theta}(\Delta_a):= \big( \mu\left(\theta_{a,t}\right) \leq \mu_a + \Delta_a \big).$$ 
On $\tilde{E}_{a,t}$, the empirical sufficient statistic of arm $a$ at round $t$ is well concentrated around its mean and a 'likely' realization of arm $a$ has been observed. On $E_{a,t}^{\theta}$, the mean of the distribution with parameter $\theta_{a,t}$ does not exceed by much the true mean, $\mu_a$. 
$\delta_a$ and $\Delta_a$ will be carefully chosen at the end of the proof.

\paragraph{Step 1: Concentration Results}
We state here the two concentration results that are necessary to evaluate the probability of the above events. 
\begin{lemma}\label{LemmaSuffStatConc} Let $(y_s)$ be an i.i.d sequence of distribution $p(\cdot\mid\theta)$ and $\delta>0$. Then
\begin{align*}
\bP\left(\left|\frac{1}{u}\sum_{s=1}^{u}[T(y_s) - F'(\theta)]\right| \geq \delta \right) \leq 2e^{-u\tilde{K}(\theta,\delta)},
\end{align*}
where $\tilde{K}(\theta,\delta)=\min(K(\theta + g(\delta), \theta), K(\theta - h(\delta), \theta))$, with $g(\delta)>0$ defined by $ F'(\theta+g(\delta)) = F'(\theta) + \delta$ and $h(\delta)>0$ defined by $ F'(\theta-h(\delta)) = F'(\theta) - \delta.$
\end{lemma}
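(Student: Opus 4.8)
The plan is to run the standard Cram\'er--Chernoff argument, exploiting the fact that in a canonical exponential family the cumulant generating function of the sufficient statistic is \emph{exactly} $F(\theta+\lambda)-F(\theta)$, so that the resulting large-deviation rate function coincides with the Bregman representation of the KL-divergence in \eqref{KLEXP}. Writing $\bar{T}_u := \frac{1}{u}\sum_{s=1}^u T(y_s)$ and recalling that $\bE_{X|\theta}[T(X)] = F'(\theta)$, I would first split the two-sided event by a union bound,
\[
\bP\!\left(|\bar{T}_u - F'(\theta)| \ge \delta\right) \le \bP\!\left(\bar{T}_u - F'(\theta) \ge \delta\right) + \bP\!\left(\bar{T}_u - F'(\theta) \le -\delta\right),
\]
and then treat each one-sided tail separately by the exponential Markov inequality.

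For the upper tail, the central computation is the moment generating function of $T$, which follows directly from the density \eqref{ExpFam}: for any $\lambda$ with $\theta+\lambda\in\Theta$,
\[
\bE_{X|\theta}\!\left[e^{\lambda T(X)}\right] = e^{-F(\theta)}\int A(x)\,e^{T(x)(\theta+\lambda)}\,d\nu(x) = e^{F(\theta+\lambda)-F(\theta)}.
\]
Using independence, the Chernoff bound then reads $\bP(\bar{T}_u - F'(\theta) \ge \delta) \le \exp(-u\,\psi(\delta))$ with rate function $\psi(\delta) = \sup_{\lambda>0}\{\lambda(F'(\theta)+\delta) - (F(\theta+\lambda)-F(\theta))\}$. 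Since $F$ is strictly convex, this objective is concave in $\lambda$ and its derivative vanishes exactly when $F'(\theta+\lambda) = F'(\theta)+\delta$, i.e. at the unique maximizer $\lambda = g(\delta)>0$ (positive and unique because $F'$ is strictly increasing and $\delta>0$).

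Substituting $\theta' := \theta+g(\delta)$ and using $F'(\theta') = F'(\theta)+\delta$, the optimal value becomes $\psi(\delta) = (\theta'-\theta)F'(\theta') - F(\theta') + F(\theta)$, which is precisely $\K(\theta+g(\delta),\theta)$ by the Bregman formula \eqref{KLEXP}. The lower tail is handled identically after replacing $T$ by $-T$ (equivalently, optimizing over $\lambda<0$): the optimal shift is $\lambda = -h(\delta)$ and the exponent becomes $\K(\theta-h(\delta),\theta)$. Combining the two one-sided bounds and replacing each exponent by their minimum $\tilde{K}(\theta,\delta)$ yields $\bP(|\bar{T}_u - F'(\theta)| \ge \delta) \le 2e^{-u\tilde{K}(\theta,\delta)}$, as claimed.

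The only real subtlety, and the step I expect to require the most care, is the well-posedness of the Legendre transform: I must ensure that the maximizing shift $g(\delta)$ keeps $\theta+g(\delta)$ inside $\Theta$, so that the MGF $e^{F(\theta+\lambda)-F(\theta)}$ is finite and the critical point is interior rather than at the boundary of the admissible range of $\lambda$. This is exactly guaranteed by the hypothesis that $g(\delta)$ (resp. $h(\delta)$) is well defined, namely that there exists a point of $\Theta$ at which $F'$ attains the value $F'(\theta)+\delta$ (resp. $F'(\theta)-\delta$); strict convexity of $F$ then makes this point unique and the supremum attained there. Everything else reduces to the algebraic identification of the rate function with the Bregman divergence.
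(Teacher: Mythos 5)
Your proposal is correct and follows essentially the same route as the paper: a Cram\'er--Chernoff bound using the exact cumulant generating function $F(\theta+\lambda)-F(\theta)$, optimization at $F'(\theta+\lambda^*)=F'(\theta)+\delta$ (so $\lambda^*=g(\delta)$), identification of the optimal exponent with $\K(\theta+g(\delta),\theta)$ via the Bregman formula, and a symmetric treatment of the lower tail. Your closing remark on the well-posedness of the Legendre transform is a point the paper leaves implicit, but it is already guaranteed by the lemma's hypothesis that $g(\delta)$ and $h(\delta)$ exist.
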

The two following inequalities that will be useful in the sequel can easily be deduced from Lemma \ref{LemmaSuffStatConc}. Their  proof is gathered in Appendix \ref{sec:suff-conc}  with that of Lemma \ref{LemmaSuffStatConc}. For any arm $a$,
\begin{align}
\sum_{t=1}^{T}\bP(a_t=a,\tilde{E}_{a,t}(\delta_a)^c) & \leq \sum_{t=1}^{\infty} \bP\left(p(y_{a,1} |\theta_a) \leq L(\theta_a)\right)^t + \sum_{t=1}^{\infty}2te^{-(t-1)\tilde{K}(\theta_a,\delta_a)} \label{FirstUseLemma} \\
\sum_{t=1}^{T}\bP(\tilde{E}_{a,t}(\delta_a)^c | N_{a,t} > t^b) & \leq \sum_{t=1}^{\infty} \bP\left(p(y_{a,1} |\theta_a) \leq L(\theta_a)\right)^{t^b} + \sum_{t=1}^{\infty}2t^2e^{-(t^b-1)\tilde{K}(\theta_a,\delta_a)} \label{SndUseLemma} 
\end{align} 
The second result tells us that concentration of the empirical sufficient statistic around its mean implies concentration of the posterior distribution around the true parameter:
\begin{theorem}[\textbf{Posterior Concentration}]\label{TheoremPostConc} Let $\pi_{a,0}$ be the Jeffreys' prior. There exists constants $C_{1,a} = C_1(F,\theta_a)>0$, $C_{2,a} = C_2(F,\theta_a,\Delta_a)>0$, and $N(\theta_a,F)$ s.t., $\forall N_{a,t}\geq N(\theta_a,F)$,
\begin{align*}
\Ind_{\tilde E_{a,t}}\bP\big(\mu(\theta_{a,t}) > \mu(\theta_{a}) + \Delta_a | Y_{a,t}\big)
	\leq C_{1,a} e^{-(N_{a,t}-1) (1-\delta_a C_{2,a} )\K(\theta_a,\mu^{-1}(\mu_a+\Delta_a))+\ln(N_{a,t})}
\end{align*}
whenever $\delta_a<1$ and $\Delta_a$ are such that $1-\delta_a C_{2,a}(\Delta_a)>0$.
\end{theorem}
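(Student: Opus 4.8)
The plan is to work directly with the explicit Jeffreys posterior \eqref{Posterior}. Writing $n=N_{a,t}$, $S_n=\sum_{s=1}^{n}T(y_{a,s})$, and denoting the unnormalised posterior density by $g(\theta)=\sqrt{F''(\theta)}\exp(\theta S_n - nF(\theta))$, the first step is a monotonicity reduction: since $\mu$ is continuous and strictly increasing (Section~\ref{sec:prelims}), the event $\{\mu(\theta_{a,t})>\mu_a+\Delta_a\}$ coincides with $\{\theta_{a,t}>\theta_a^+\}$, where $\theta_a^+:=\mu^{-1}(\mu_a+\Delta_a)>\theta_a$. Hence
\[
\bP\big(\mu(\theta_{a,t})>\mu_a+\Delta_a \mid Y_{a,t}\big)=\frac{\int_{\theta_a^+}^{\infty} g(\theta)\,d\theta}{\int_{\Theta} g(\theta)\,d\theta},
\]
and it suffices to bound the numerator from above and the denominator from below.

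The key algebraic device is the Bregman/KL identity \eqref{KLEXP}. On $\tilde E_{a,t}$ I would isolate the ``likely'' observation $y_{a,s'}$ and write $S_n=T(y_{a,s'})+(n-1)\bar S$ with $\bar S:=\frac{1}{n-1}\sum_{s\neq s'}T(y_{a,s})$, so that $|\bar S-F'(\theta_a)|\le\delta_a$. Factoring $g(\theta)=\sqrt{F''(\theta)}\,e^{\theta T(y_{a,s'})-F(\theta)}\,e^{(n-1)(\theta\bar S-F(\theta))}$ and applying \eqref{KLEXP} centred at $\theta_a$, the bulk factor becomes $e^{(n-1)[\,\mathrm{const}-\K(\theta_a,\theta)+\theta(\bar S-F'(\theta_a))\,]}$, where the $\theta$-independent constant $F'(\theta_a)\theta_a-F(\theta_a)$ cancels between numerator and denominator.

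For the numerator I would use that $\K(\theta_a,\cdot)$ is increasing on $[\theta_a^+,\infty)$ and that the perturbation satisfies $\theta(\bar S-F'(\theta_a))\le\delta_a|\theta|\le \delta_a C_{2,a}\,\K(\theta_a,\theta)$ once $C_{2,a}$ is taken to dominate $\sup_{\theta\ge\theta_a^+}|\theta|/\K(\theta_a,\theta)$; this is exactly where the factor $(1-\delta_a C_{2,a})$ appears. Integrating the resulting decreasing integrand $\sqrt{F''(\theta)}\,e^{-(n-1)(1-\delta_a C_{2,a})\K(\theta_a,\theta)}$ over $[\theta_a^+,\infty)$ gives, by an endpoint Laplace estimate, a bound of order $e^{-(n-1)(1-\delta_a C_{2,a})\K(\theta_a,\theta_a^+)}$ times a polynomial prefactor, the single-observation factor being controlled because $y_{a,s'}$ is likely. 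For the denominator I would restrict the integral to a fixed small neighbourhood $I$ of $\theta_a$ with $I\subset\Theta$: there $\K(\theta_a,\theta)$ is small, $\sqrt{F''(\theta)}$ is bounded below, and the factor $e^{\theta T(y_{a,s'})-F(\theta)}$ stays away from $0$ (this is precisely the use of $p(y_{a,s'}\mid\theta_a)\ge L(\theta_a)$), so a Laplace-type estimate yields a lower bound of polynomial order. Taking the ratio and absorbing the two prefactors into $C_{1,a}$ and the additive $\ln(N_{a,t})$ gives the stated inequality, with $N(\theta_a,F)$ chosen large enough that $I\subset\Theta$ and the Laplace approximations are valid.

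The main obstacle is the uniform control, over the \emph{unbounded} region $\theta>\theta_a^+$, of the perturbation $\theta(\bar S-F'(\theta_a))$ relative to $\K(\theta_a,\theta)$: one must show that $|\theta|/\K(\theta_a,\theta)$ stays bounded as $\theta\to+\infty$ so that $C_{2,a}$ is finite and depends only on $(F,\theta_a,\Delta_a)$, and this is delicate exactly for the heavy-tailed families in which $\K$ may grow only linearly. A secondary difficulty is the denominator lower bound, where the interplay between the isolated likely observation and the Laplace estimate must be arranged so that no uncontrolled dependence on $y_{a,s'}$ survives in the final constant.
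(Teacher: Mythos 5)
Your overall strategy mirrors the paper's: isolate the likely observation $y_{a,s'}$, use the Bregman identity \eqref{KLEXP} to extract a factor $e^{-(N_{a,t}-1)\K(\theta_a,\theta)}$ from the likelihood, bound the numerator by $\inf\{\K(\theta_a,\theta):\mu(\theta)\geq\mu_a+\Delta_a\}=\K(\theta_a,\theta_a^+)$ with $\theta_a^+=\mu^{-1}(\mu_a+\Delta_a)$ after absorbing the empirical fluctuation into $(1-\delta_a C_{2,a})$, and lower bound the denominator by the mass of a neighbourhood of $\theta_a$ (this is Lemma \ref{KLDec} together with Steps 2--3 of Section \ref{sec:proofs}). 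However, there is a genuine gap in your denominator step, and it traces back to how you centred the perturbation. You keep the term $\theta(\bar{S}-F'(\theta_a))$ and cancel only the constant $F'(\theta_a)\theta_a-F(\theta_a)$. That term appears in \emph{both} integrals and is not constant in $\theta$, so to bound the ratio you must upper bound it on the numerator's domain \emph{and} lower bound it on the denominator's domain; your denominator sketch omits it entirely. With your centring, the best pointwise lower bound on a neighbourhood $I$ of $\theta_a$ is $e^{-(n-1)\delta_a\sup_{I}|\theta|}\approx e^{-(n-1)\delta_a|\theta_a|}$, which is exponentially small in $n$ whenever $\theta_a\neq 0$; this would inject a spurious factor $e^{+(n-1)\delta_a|\theta_a|}$ into the final bound and destroy the stated rate. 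The fix is exactly what the paper does: absorb $\theta_a(\bar{S}-F'(\theta_a))$ into the cancelling constant as well, so that the perturbation becomes $(\theta-\theta_a)(\bar{S}-F'(\theta_a))$, bounded in absolute value by $\delta_a|\theta-\theta_a|$ on $\tilde{E}_{a,t}$; and then restrict the denominator integral not to a \emph{fixed} neighbourhood but to the shrinking KL-ball $B_{1/u^2}(\theta_a)$, on which $|\theta-\theta_a|\leq\sqrt{2\K(\theta_a,\theta)/F''(\theta_a)}$ is of order $1/u$, so that the perturbation factor is bounded below by a constant. The width of order $1/u$ of that ball is also precisely where the $+\ln(N_{a,t})$ in the exponent comes from; a fixed neighbourhood does not produce it.

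Two smaller points. First, the difficulty you flag as the ``main obstacle'' is not one: by strict convexity of $F$, the ratio $\K(\theta_a,\theta)/|\theta-\theta_a|$ is increasing in $|\theta-\theta_a|$, so its infimum over $\{\theta:\mu(\theta)\geq\mu_a+\Delta_a\}$ is attained at $\theta_a^+$ and equals the explicit constant $(C_{2,a})^{-1}=\frac{F(\theta_a^+)-F(\theta_a)}{\theta_a^+-\theta_a}-F'(\theta_a)>0$; heavy tails live in the reward variable $x$, not in the natural parameter $\theta$, and no extra tail control is needed (your $\sup|\theta|/\K$ is also finite, but after recentring you should use $\sup|\theta-\theta_a|/\K$ for consistency). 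Second, your use of $p(y_{a,s'}\mid\theta_a)\geq L(\theta_a)$ to control the single-observation factor is the right idea, but to make the resulting constant independent of $y_{a,s'}$ you need the uniform statements the paper proves: compactness of $\{y:p(y\mid\theta_a)\geq L(\theta_a)\}$ to obtain the finite bound $L'(\theta_a)$ on $\int_\Theta p(y_{s'}\mid\theta')\sqrt{F''(\theta')}\,d\theta'$, and joint continuity of $p$ to keep $p(y_{s'}\mid\theta')$ bounded below on the shrinking ball uniformly over such $y_{s'}$.
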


\paragraph{Step 2: Lower Bound the Number of Optimal Arm Plays with High Probability}
The main difficulty adressed in previous regret analyses for Thompson Sampling is the control of the number of draws of the optimal arm.
We provide this control in the form of Proposition \ref{PropositionNumOptPla} which is adapted from Proposition 1 in \cite{KaufmannKordaMunosALT2012} whose proof, an outline of which is given in Appendix \ref{sec:num-opt-plays}, explores in depth the randomised nature of Thompson Sampling. In particular, we show that the proof in \cite{KaufmannKordaMunosALT2012} can be significantly simplified, but at the expense of no longer being able to describe the constant $C_b$ explicitly:
\begin{proposition}\label{PropositionNumOptPla}For any $b\in(0,1)$ there exists a constant $C_b(\pi,\mu_1,\mu_2,K)<\infty$ such that
\[
\sum_{t=1}^{\infty}\bP\left(N_{1,t} \leq  t^b\right) \leq C_b.
\]
\end{proposition}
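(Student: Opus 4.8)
The plan is to reformulate the statement in terms of the waiting times between successive plays of the optimal arm and to show that these epochs are short with a geometric tail, so that accumulating $\lfloor t^b\rfloor$ of them before time $t$ is exponentially unlikely. Writing $\tau_j$ for the round at which arm $1$ is played for the $j$-th time (with $\tau_0=0$) and $m:=\lfloor t^b\rfloor+1$, one has the exact identity $\bP(N_{1,t}\le t^b)=\bP(\tau_m> t)=\bP\big(\sum_{j=0}^{m-1}W_j>t\big)$, where $W_j:=\tau_{j+1}-\tau_j$ is the number of rounds during which $N_{1,\cdot}$ stays equal to $j$. Throughout an epoch $j$ the posterior of arm $1$ is frozen at the Jeffreys posterior \eqref{Posterior} built from its first $j$ observations $Y_1^{j}$, so $W_j$ is a waiting time for a success whose per-round probability I will bound below uniformly.

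The engine of the proof is a uniform lower bound $\phi_s(1)=\bP(A_s=1\mid\cH_{s-1})\ge p>0$ on the probability of selecting arm $1$, valid on a ``good'' event $\mathcal{G}_j$ on which the empirical sufficient statistic $\frac1j\sum_{s\le j}T(y_{1,s})$ of arm $1$ lies close to $F'(\theta_1)$. Fix a level $\mu'$ with $\mu_2<\mu'<\mu_1$. On $\mathcal{G}_j$ the frozen posterior of arm $1$ retains mass bounded away from $0$ above $\mu'$: this is an anti-concentration statement that I would read off the explicit form \eqref{Posterior}, using the lower bound on the prior mass of KL-balls afforded by the Jeffreys prior (the $\Omega(\sqrt{\epsilon})$ estimate noted in the intuition paragraph, which ties $\pi_J$ to $F''=\lim_{\theta'\to\theta}\K(\theta,\theta')/|\theta-\theta'|^2$). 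Simultaneously, since every $\mu_a<\mu'$ for $a\neq1$, the suboptimal arms sample below $\mu'$ with probability bounded below, which I would obtain from Theorem \ref{TheoremPostConc} and Lemma \ref{LemmaSuffStatConc}. Multiplying the conditionally independent per-arm sampling events yields $\phi_s(1)\ge p$; note the threshold $\mu'\in(\mu_2,\mu_1)$ is exactly why the constant $C_b$ is allowed to depend on $\mu_1$ and $\mu_2$.

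With this in hand the assembly is routine. On the complement $\mathcal{G}_j^c$ --- atypical data for arm $1$ --- I would invoke Lemma \ref{LemmaSuffStatConc}, exactly as in \eqref{FirstUseLemma}, to produce a summable contribution independent of $t$. On the good event $W_j$ is stochastically dominated by a geometric variable of parameter $p$, so $\sum_{j=0}^{m-1}W_j$ is dominated by a sum of $m$ independent geometrics with mean $m/p=O(t^b)$. Since $b<1$ this mean is $o(t)$, and a Chernoff bound for such a sum gives $\bP(\sum_{j<m}W_j>t)\le e^{-ct}$ for a problem-dependent $c>0$ and all large $t$; summing the two contributions over $t$ yields a finite $C_b$.

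The main obstacle is the lower bound on $\phi_s(1)$, and specifically its second ingredient: controlling the suboptimal arms' samples uniformly over the whole epoch. Their posteriors are themselves random and evolving, and early in the run --- when those arms are also under-sampled --- they can sample above $\mu'$, breaking the clean product bound. Resolving this is exactly the delicate, randomization-exploiting core inherited from Proposition 1 of \cite{KaufmannKordaMunosALT2012}: one must argue that such over-sampling by suboptimal arms is self-limiting (its cumulative count over all rounds is only $O(\ln t)$, essentially the content of the regret bound itself) and absorb those rare rounds into the error term. It is precisely in handling this coupling more crudely --- by a union bound and stochastic domination rather than a sharp accounting --- that the argument can be simplified at the cost of losing an explicit expression for $C_b$.
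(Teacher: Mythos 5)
There is a genuine gap, and it sits exactly where you flag it: the uniform per-round lower bound $\phi_s(1)=\bP(A_s=1\mid\cH_{s-1})\geq p>0$ does not exist, even on a good event for arm $1$'s data. Lower-bounding $\phi_s(1)$ by the product $\bP(\mu(\theta_{1,s})>\mu'\mid\cH_{s-1})\cdot\prod_{a\neq 1}\bP(\mu(\theta_{a,s})\leq\mu'\mid\cH_{s-1})$ requires simultaneous control of the suboptimal arms' posteriors, and when arm $a$ has been sampled only a handful of times (or has atypical data) the factor $\bP(\mu(\theta_{a,s})\leq\mu'\mid\cH_{s-1})$ can be arbitrarily small --- there is no event measurable with respect to arm $1$'s observations on which this is fixed. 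Consequently the claimed stochastic domination of $W_j$ by a geometric of parameter $p$, and the Chernoff bound on $\sum_{j<m}W_j$ built on it, have no foundation. Your proposed repair --- that over-sampling by suboptimal arms is self-limiting because ``its cumulative count over all rounds is only $O(\ln t)$, essentially the content of the regret bound itself'' --- is circular: Proposition \ref{PropositionNumOptPla} is an ingredient of the regret bound, and the only non-circular control available, Lemma \ref{LemmaA}, applies exclusively to \emph{saturated} arms (those with $N_{a,s}>C_a\ln t$), which is precisely the case you cannot assume here.

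The paper's proof avoids this by not asking for a per-round selection probability at all. It uses the pigeonhole bound $\bP(N_{1,t}\leq t^b)\leq\sum_{j\leq t^b}\bP(\xi_j\geq t^{1-b}-1)$ (a union bound over the event that \emph{some} gap is long, rather than your exact identity for the sum of all gaps), and then runs a saturation induction inside the long gap $\cI_j$: since arm $1$ is not played there, the suboptimal arms must absorb all $\sim t^{1-b}$ rounds, so they are forced past the threshold $C_a\ln t$ one by one across the $K$ subintervals $\cI_{j,l}$; once all are saturated, Lemma \ref{LemmaA} pins their samples below $\mu_a+d_a$, and Lemma \ref{BVM} (the anti-concentration of arm $1$'s frozen posterior, which is the one ingredient of your argument that does survive) yields the factor $C^{(t^{1-b}-1)/K}$. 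The forced saturation \emph{within the long gap} is the mechanism that replaces your missing uniform $p$; without reproducing it (or an equivalent), the proposal does not close.
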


\paragraph{Step 3: Decomposition}
The idea in this step is to decompose the probability of playing a suboptimal arm into principle and negligible components and control these components with the results from Steps 1 and 2:
\begin{align}
\sum_{t = 1}^T\bP\left(a_t = a\right) = \underbrace{\sum_{t = 1}^T\bP\left(a_t = a, \tilde{E}_{a,t},E_{a,t}^{\theta}\right)}_{(A)}
						+\underbrace{\sum_{t = 1}^T\bP\left(a_t=a,\tilde{E}_{a,t},(E_{a,t}^{\theta})^c\right)}_{(B)}+\underbrace{\sum_{t = 1}^T\bP\left(a_t=a,\tilde{E}_{a,t}^c\right)}_{(C)}.\label{Decomp}
\end{align}
The terms (B) and (C) are about concentration of the posterior on the suboptimal arm. An upper bound on term (C) is given in \eqref{FirstUseLemma}, whereas a bound on term (B) follows from Lemma \ref{LemmaSubOptArmSampleTerm} below. Although the proof of this lemma is standard, and bears a strong similarity to Lemma 3 of \cite{Agrawal:GoyalICML2013}, we provide it in Appendix \ref{ProofLemmaEmilie} for the sake of completeness. 
\begin{lemma}\label{LemmaSubOptArmSampleTerm}
For all actions $a$ and for all $\epsilon>0$, $\exists$ $N_\epsilon=N_\epsilon(\delta_a,\Delta_a,\theta_a)>0$ such that
\begin{align*}
(B)\leq[(1-\epsilon)(1-\delta_a C_{2,a} )\K(\theta_a,\mu^{-1}(\mu_a+\Delta _a))]^{-1} \ln(T)
		+ \max\{N_\epsilon, N(\theta_a,F)\} +  1.
\end{align*}
where $N_{\epsilon} = N_{\epsilon}(\delta_a,\Delta_a, \theta_a)$ is the smallest integer such that for all $n\geq N_{\epsilon}$
\begin{align*}
(n-1)^{-1}\ln  (C_{1,a}n)<\epsilon(1-\delta_a C_{2,a} )\K(\theta_a,\mu^{-1}(\mu_a+\Delta_a)),
\end{align*}
and $N(\theta_a,F)$ is the constant from Theorem \ref{TheoremPostConc}.
\end{lemma}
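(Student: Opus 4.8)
The plan is to bound $(B)=\sum_{t=1}^{T}\bP\big(a_t=a,\tilde{E}_{a,t},(E_{a,t}^{\theta})^c\big)$, where $(E_{a,t}^{\theta})^c=\{\mu(\theta_{a,t})>\mu_a+\Delta_a\}$, by reorganising the sum over rounds into a sum over the number $j=N_{a,t}$ of observations already gathered from arm $a$. Write $K_a:=(1-\delta_aC_{2,a})\K(\theta_a,\mu^{-1}(\mu_a+\Delta_a))$ for the exponent appearing in Theorem \ref{TheoremPostConc}, and $q_j:=\bP(\mu(\theta)>\mu_a+\Delta_a\mid Y_a^{j})$ for $\theta$ drawn from the posterior $\pi_{a}$ based on the first $j$ observations $Y_a^{j}$. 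Two structural facts drive the argument. First, because a pull of arm $a$ increments $N_{a,t}$ by one, for each fixed $j$ there is \emph{at most one} round $t\le T$ with $a_t=a$ and $N_{a,t}=j$; this lets me trade the time index for the pull count. Second, the event $\tilde{E}_{a,t}$ is measurable with respect to $Y_a^{N_{a,t}}$, and given the history $\cH_{t-1}$ the sample $\theta_{a,t}$ is an independent draw from the posterior based on $Y_a^{N_{a,t}}$, so that $\bP(\mu(\theta_{a,t})>\mu_a+\Delta_a\mid\cH_{t-1})=q_{N_{a,t}}$.

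The first step is a counting/relaxation. Grouping rounds by the value $j=N_{a,t}$ and then enlarging the winning event $\{a_t=a\}$ to the event that \emph{some} count-$j$ round produces an over-estimate, I would bound
\begin{align*}
(B)\;\le\;\sum_{j\ge0}\bP\Big(\tilde{E}_{a,t},\ \exists\, t\le T:\ N_{a,t}=j,\ \mu(\theta_{a,t})>\mu_a+\Delta_a\Big).
\end{align*}
Given $Y_a^{j}$, each of the (at most $T$) rounds at count $j$ draws an independent posterior sample exceeding the threshold with probability $q_j$; a union bound over these rounds together with the trivial bound by $1$ yields
\begin{align*}
(B)\;\le\;\sum_{j\ge0}\bE\big[\Ind_{\tilde{E}_{a,t}}\min(1,\,T\,q_j)\big].
\end{align*}
It is precisely this step that injects the factor $T$, and hence is responsible for the $\ln T$ term; note that summing the raw posterior probabilities $q_j$, which decay geometrically, would only produce a constant.

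The second step inserts the posterior concentration bound and sums. On $\tilde{E}_{a,t}$ with $j=N_{a,t}\ge N(\theta_a,F)$, Theorem \ref{TheoremPostConc} gives $\Ind_{\tilde{E}_{a,t}}q_j\le C_{1,a}\,j\,e^{-(j-1)K_a}$. Splitting the sum at $N_0:=\max\{N_\epsilon,N(\theta_a,F)\}$, the terms $j<N_0$ each contribute at most $1$, for a total of at most $N_0$. For $j\ge N_0$ the defining inequality of $N_\epsilon$, namely $(j-1)^{-1}\ln(C_{1,a}j)<\epsilon K_a$, absorbs the polynomial prefactor into the exponent,
\begin{align*}
C_{1,a}\,j\,e^{-(j-1)K_a}\;\le\;e^{-(1-\epsilon)(j-1)K_a},
\end{align*}
so it remains to evaluate $\sum_{j\ge N_0}\min\!\big(1,\,T\,e^{-(1-\epsilon)(j-1)K_a}\big)$. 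The summand equals $1$ until $(j-1)(1-\epsilon)K_a$ exceeds $\ln T$, i.e. for roughly $\frac{\ln T}{(1-\epsilon)K_a}$ values of $j$, and thereafter decays geometrically; the capped part contributes the leading $\frac{\ln T}{(1-\epsilon)K_a}$ and the geometric tail contributes a constant absorbed into the additive $+1$. Collecting the pieces gives the claimed bound.

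The main obstacle is the middle step: passing rigorously from the random time-sum to the per-count sum and justifying $\bE[\Ind_{\tilde{E}_{a,t}}\min(1,Tq_j)]$. The delicate point is that the set of rounds at count $j$ is itself random, depending on the other arms' samples and on these very draws, so the union bound must be run at the level of the conditional expectation $\bE\big[\sum_{t:N_{a,t}=j}\Ind(\mu(\theta_{a,t})>\mu_a+\Delta_a)\mid Y_a^{j}\big]=\bE[M_j\mid Y_a^{j}]\,q_j\le T q_j$, where $M_j\le T$ counts the rounds spent at count $j$, and then combined with the trivial bound $\bP(\cdot)\le1$. Everything else is either quoted from Theorem \ref{TheoremPostConc} or is the routine geometric summation above.
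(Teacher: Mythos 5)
Your proof is correct and follows essentially the same route as the paper's: both apply Theorem \ref{TheoremPostConc}, use the defining property of $N_\epsilon$ to absorb the $\ln(C_{1,a}N_{a,t})$ prefactor into a $(1-\epsilon)$ loss in the exponent, and extract the $\ln T/\big((1-\epsilon)(1-\delta_aC_{2,a})\K(\theta_a,\mu^{-1}(\mu_a+\Delta_a))\big)$ term by locating the pull count at which the conditional probability crosses $1/T$ --- the paper does this via the stopping time $\tau=\inf\{t: N_{a,t}\geq \max(L_T,N_\epsilon,N(\theta_a,F))+1\}$ and the split $\bE[N_{a,\tau}]+\sum_{t>\tau}1/T$, while you reindex by the pull count $j$ and sum $\min(1,Tq_j)$, handling the same measurability issue (that $\{a_t=a\}$ is not determined before the round-$t$ samples) by a conditional union bound rather than by dropping the indicator. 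The only quantitative difference is the additive constant, since your geometric tail contributes $(1-e^{-(1-\epsilon)K_a})^{-1}$ rather than the literal $+1$; but the paper's own computation likewise ends with $+2$ against the stated $+1$, so this is within the lemma's existing slack and harmless for Theorem \ref{thm:main}.
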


When we have seen enough observations on the optimal arm, term (A) also becomes a result about the concentration of the posterior, but this time for the optimal arm:
\begin{align}
(A) \leq & \sum_{t = 1}^T \bP \left(a_t = a, \tilde{E}_{a,t},E_{a,t}^{\theta}\mid N_{1,t} >  t^b\right) + C_b
	\leq  \sum_{t = 1}^T \bP \left(\mu(\theta_{1,t}) \leq \mu_1 - \Delta_a' \mid N_{1,t} >  t^b\right) + C_b \nonumber\\
	\leq & \underbrace{\sum_{t = 1}^T \bP \left(\mu(\theta_{1,t}) \leq \mu_1 - \Delta_a', \tilde{E}_{1,t}(\delta_1)\mid N_{1,t} >  t^b\right)}_{B'}+ \underbrace{\sum_{t = 1}^T \bP \left(\tilde{E}_{1,t}^c(\delta_1) \mid N_{1,t} >  t^b\right)}_{C'} + C_b \label{DecompA}
\end{align}
where $\Delta_a' = \mu_1-\mu_a - \Delta_a$ and $\delta_1>0$ remains to be chosen. The first inequality comes from Proposition \ref{PropositionNumOptPla}, and the second inequality comes from the following fact: if arm 1 is not chosen and arm $a$ is such that $\mu(\theta_{a,t})\leq \mu_a + \Delta_a$, then $\mu(\theta_{1,t})\leq \mu_a + \Delta_a$. 
A bound on term (C') is given in (\ref{SndUseLemma}) for $a=1$ and $\delta_1$. 
In Theorem \ref{TheoremPostConc}, we bound the conditional probability that $\mu(\theta_{a,t})$ exceed the true mean. Following the same lines, we can also show that, on $\tilde E_{1,t}(\delta_1)$,
$$\bP\left(\mu(\theta_{1,t}) \leq \mu_1 - \Delta_a' | Y_{1,t}\right)\leq
C_{1,1} e^{-(N_{1,t}-1) (1-\delta_1 C_{2,1} )\K(\theta_1,\mu^{-1}(\mu_1 -\Delta_a'))+ \ln(N_{1,t})}.$$ 
For any $\Delta_a'>0$, one can choose $\delta_1$ such that $1 - \delta_1 C_{1,1} >0$. Then, with $N=N(\mathcal{P})$ such that the function 
$$u \mapsto e^{-(u-1) (1-\delta_1C_{2,1})\K(\theta_1,\mu^{-1}(\mu_1-\Delta_a')) + \ln u}$$
 is decreasing for $u\geq N$, $(B')$ is bounded by  
\begin{align*}
 N^{1/b} +  \sum_{t=N^{1/b}+1}^{\infty}C_{1,1} e^{-(t^b-1) (1-\delta_1 C_{2,1})\K(\theta_1,\mu^{-1}(\mu_1-\Delta_a')) + \ln(t^b)}<\infty.
\end{align*}

\paragraph{Step 4: Choosing the Values $\delta_a$ and $\epsilon_a$} So far, we have shown that for any $\epsilon>0$ and for any choice of $\delta_a>0$ and $0<\Delta_a<\mu_1 - \mu_a$ such that $1-\delta_a C_{2,a}>0$, there exists a constant $\mathcal{C}(\delta_a,\Delta_a,\epsilon,\mathcal{P})$ such that 
$$\bE[N_{a,T}] \leq \frac{\ln(T)}{(1-\delta_aC_{2,a})K(\theta_a,\mu^{-1}(\mu_a+\Delta_a))(1-\epsilon)} + \mathcal{C}(\delta_a,\Delta_a,\epsilon,\mathcal{P})$$
The constant is of course increasing (dramatically) when $\delta_a$ goes to zero, $\Delta_a$ to $\mu_1-\mu_a$, or $\epsilon$ to zero. But one can choose $\Delta_a$ close enough to $\mu_1-\mu_a$ and $\delta_a$ small enough, such that
$$(1 - C_{2,a}(\Delta_a)\delta_a)\K(\theta_a,\mu^{-1}(\mu_a+\Delta_a)) \geq \frac{\K(\theta_a,\theta_1)}{(1+\epsilon)},$$
and this choice leads to 
$$\bE[N_{a,T}]\leq \frac{1+\epsilon}{1-\epsilon}\frac{\ln(T)}{\K(\theta_a,\theta_1)} + \mathcal{C}(\delta_a,\Delta_a,\epsilon,\mathcal{P}).$$
Using that $\cR(\phi,T) = \sum_{a=2}^{K}(\mu_1-\mu_a)\bE[N_{a,T}]$ concludes the proof. \qed

\section{Posterior Concentration: Proof of Theorem \ref{TheoremPostConc}}
\label{sec:proofs}

\label{sec:post-conc}

For ease of notation, we drop the subscript $a$ and let $(y_s)$ be an i.i.d. sequence of distribution $p_\theta$, with mean $\mu=\mu(\theta)$. Furthermore, by conditioning on the value of $N_s$, it is enough to bound
$\Ind_{\tilde{E}_u}\bP\left(\mu(\theta_u)\geq\mu+\Delta | Y^u\right)$ 
where $Y^u=(y_s)_{1\leq s \leq u}$ and 
$$
\tilde{E}_{u} = \left(\exists 1 \leq s' \leq u :p(y_{s'}|\theta) \geq L(\theta), \left|\frac{\sum_{s=1,s\neq s'}^{u} T(y_{s})}{u -1} - F'(\theta)\right| \leq \delta  \right).
$$


\paragraph{Step 1: Extracting a Kullback-Leibler Rate} The argument rests on the following Lemma, whose proof can be found in Appendix \ref{KLExtract}

\begin{lemma}\label{KLDec} Let $\tilde E_u$ be the event defined by \eqref{EventU}, and introduce
$\Theta_{\theta,\Delta} := \{ \theta'\in\Theta : \mu(\theta')\geq \mu(\theta) + \Delta\}.$
The following inequality holds:
\begin{align}
\Ind_{\tilde{E}_u}\bP\left(\mu(\theta_u)\geq\mu+\Delta | Y^u\right)
		\leq \frac{\int_{\theta'\in\Theta_{\theta,\Delta}} e^{-(u-1)\left(K[\theta,\theta'] - \delta |\theta - \theta'|\right)}\pi(\theta'|y_{s'})d\theta'}
			{{\int_{\theta'\in\Theta} e^{-(u-1)\left(K[\theta,\theta']+\delta |\theta - \theta'|\right)}\pi(\theta'|y_{s'})d\theta'}}, \label{PosteriorRep}
			\end{align}
with $s'=\inf\{s\in\N : p(y_{s}|\theta) \geq L \}.$
\end{lemma}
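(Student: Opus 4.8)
The plan is to express the posterior probability as a ratio of integrals of the unnormalised posterior density and then peel off exactly one observation, so that the remaining $u-1$ observations produce a clean $(u-1)$-fold Kullback--Leibler rate via the Bregman identity \eqref{KLEXP}, while the leftover linear term is controlled by the concentration encoded in $\tilde E_u$.

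First I would use \eqref{Posterior} and the fact that $\mu$ is increasing (so $\{\mu(\theta')\ge\mu+\Delta\}=\Theta_{\theta,\Delta}$) to write
\begin{align*}
\bP\left(\mu(\theta_u)\ge\mu+\Delta\mid Y^u\right)=\frac{\int_{\Theta_{\theta,\Delta}}g(\theta')\,d\theta'}{\int_{\Theta}g(\theta')\,d\theta'},\qquad g(\theta'):=\sqrt{F''(\theta')}\exp\!\big(\theta' S_u-uF(\theta')\big),
\end{align*}
where $S_u=\sum_{s=1}^u T(y_s)$. Taking $s'=\inf\{s:p(y_s\mid\theta)\ge L\}$ and setting $\bar T=\frac{1}{u-1}\sum_{s\ne s'}T(y_s)$, I would split $S_u=T(y_{s'})+(u-1)\bar T$ and $uF(\theta')=F(\theta')+(u-1)F(\theta')$ to factor
\begin{align*}
g(\theta')=\underbrace{\sqrt{F''(\theta')}\,e^{\theta' T(y_{s'})-F(\theta')}}_{\propto\,\pi(\theta'\mid y_{s'})}\;\cdot\;e^{(u-1)[\bar T\theta'-F(\theta')]}.
\end{align*}

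Next I would rewrite the remaining exponent using the Bregman form \eqref{KLEXP}: substituting $F(\theta')=\K(\theta,\theta')+F(\theta)+F'(\theta)(\theta'-\theta)$ gives
\begin{align*}
\bar T\theta'-F(\theta')=-\K(\theta,\theta')+\big(\bar T-F'(\theta)\big)(\theta'-\theta)+\big(\bar T\theta-F(\theta)\big),
\end{align*}
where the last bracket is independent of $\theta'$. The $\theta'$-independent factor $e^{(u-1)(\bar T\theta-F(\theta))}$, together with the normalising constant that turns $\sqrt{F''(\theta')}e^{\theta' T(y_{s'})-F(\theta')}$ into the genuine one-observation posterior $\pi(\theta'\mid y_{s'})$, appears identically in numerator and denominator and cancels in the ratio. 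Hence both integrals may be taken against $\pi(\theta'\mid y_{s'})\,e^{-(u-1)\K(\theta,\theta')}$ times the factor $e^{(u-1)(\bar T-F'(\theta))(\theta'-\theta)}$.

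Finally, on $\tilde E_u$ we have $|\bar T-F'(\theta)|\le\delta$, hence $|(\bar T-F'(\theta))(\theta'-\theta)|\le\delta|\theta'-\theta|$; bounding the offending exponential from above by $e^{(u-1)\delta|\theta'-\theta|}$ in the numerator and from below by $e^{-(u-1)\delta|\theta'-\theta|}$ in the denominator yields exactly \eqref{PosteriorRep}. I expect the main obstacle to be not the algebra but the bookkeeping around $s'$: the single observation I set aside must be one for which $p(y_{s'}\mid\theta)\ge L$, so that $\pi(\theta'\mid y_{s'})$ is a genuine proper (integrable) probability density and the ratio manipulation is legitimate — this is precisely why $\tilde E_u$ requires a ``likely'' observation — and I must ensure the index whose removal yields the concentrated average $\bar T$ is the same index $s'$, which is the delicate point to verify against the definition of $\tilde E_u$.
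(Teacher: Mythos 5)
Your proposal is correct and follows essentially the same route as the paper's proof in Appendix~\ref{KLExtract}: peel off the single ``likely'' observation $y_{s'}$ to form the one-observation posterior $\pi(\cdot\mid y_{s'})$, rewrite the remaining $(u-1)$-fold exponent as $-(u-1)\K(\theta,\theta')$ plus the remainder $(u-1)(\bar T - F'(\theta))(\theta'-\theta)$, and bound that remainder by $\pm(u-1)\delta|\theta-\theta'|$ on $\tilde E_u$. The only cosmetic difference is that the paper organizes the middle step through the empirical KL divergence $K[Y_{s'}^u,\theta,\theta']$ and a remainder term $r$, whereas you do the identical algebra directly on the sufficient-statistic form of the posterior; the subtlety you flag about which index $s'$ witnesses both clauses of $\tilde E_u$ is present in the paper's argument as well.
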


\paragraph{Step 2: Upper bounding the numerator of (\ref{PosteriorRep})}

We first note that on $\Theta_{\theta,\Delta}$ the leading term in the exponential is $K(\theta,\theta')$. Indeed, from (\ref{KLEXP}) we know that
\begin{align*}
K(\theta,\theta')/| \theta-\theta'|=\left| F'(\theta) - (F(\theta) - F(\theta'))/(\theta - \theta')\right|
\end{align*}
which, by strict convexity of $F$, is strictly increasing in $|\theta - \theta'|$ for any fixed $\theta$. Now since $\mu$ is one-to-one and continuous, $\Theta_{\theta,\Delta}^c$ is an interval whose interior contains $\theta$, and hence, on $\Theta_{\theta,\Delta}$,
\begin{align*}
\frac{K(\theta,\theta')}{| \theta-\theta'|}\geq \frac{F(\mu^{-1}(\mu + \Delta)) - F(\theta)}{\mu^{-1}(\mu+\Delta) - \theta}-F'(\theta) :=(C_2(F,\theta,\Delta))^{-1}>0.
\end{align*}

So for $\delta$ such that $1-\delta C_2>0$ we can bound the numerator of (\ref{PosteriorRep}) by:
\begin{align}
& \int_{\theta'\in\Theta_{\theta,\Delta}} e^{-(u-1)(K(\theta,\theta')-\delta |\theta - \theta'| )}\pi(\theta'|y_{s'})d\theta' 
	 \leq  \int_{\theta'\in\Theta_{\theta,\Delta}} e^{-(u-1)K(\theta,\theta')\left(1-\delta C_2\right)}\pi(\theta'|y_{s'})d\theta' \nonumber \\
	&\quad  \leq  e^{-(u-1)(1-\delta C_2)\K(\theta,\mu^{-1}(\mu+\Delta))} \int_{\Theta_{\theta,\Delta}}\pi(\theta'|y_{s'}) d\theta' 
	\leq   e^{-(u-1)(1-\delta C_2)\K(\theta,\mu^{-1}(\mu+\Delta))} \label{upperbound}
\end{align}
where we have used that $\pi( \cdot | y_{s'})$ is a probability distribution, and that, since $\mu$ is increasing, $\K(\theta,\mu^{-1}(\mu + \Delta)) = \inf_{\theta' \in \Theta_{\theta,\Delta}} K(\theta,\theta')$. 

\paragraph{Step 3: Lower bounding the denominator of (\ref{PosteriorRep})} To lower bound the denominator, we reduce the integral on the whole space $\Theta$ to a KL-ball, and use the structure of the prior to lower bound the measure of that KL-ball under the posterior obtained with 
the well-chosen observation $y_{s'}$. We introduce the following notation for KL balls: for any $x\in\Theta,\ \epsilon>0$, we define
\[
B_{\epsilon}(x):=\left\{\theta'\in\Theta:K(x,\theta')\leq\epsilon\right\}.
\]
We have $\frac{K(\theta,\theta')}{(\theta-\theta')^2} \rightarrow F''(\theta) \neq 0$ (since $F$ is strictly convex). Therefore, there exists $N_1(\theta,F)$ such that for $u \geq N_1(\theta,F)$, on $B_{\frac{1}{u^2}}(\theta)$,
$$|\theta - \theta'|\leq \sqrt{2K(\theta,\theta')/F''(\theta)}.$$
Using this inequality we can then bound the denominator of (\ref{PosteriorRep}) whenever $u \geq N_1(\theta,F)$ and $\delta<1$:
\begin{align}
&\int_{\theta'\in \Theta}  e^{-(u-1)(K(\theta,\theta')+ \delta |\theta - \theta'| )}\pi(\theta'|y_{s'})d \theta' 
  \geq \int_{\theta'\in B_{1/u^2}(\theta)} e^{-(u-1)(K(\theta,\theta')+ \delta |\theta - \theta'| )}\pi(\theta'|y_{s'})d \theta' \nonumber\\
	& \geq \int_{\theta'\in B_{1/u^2}(\theta)} e^{-(u-1)\left(K(\theta,\theta')+\delta\sqrt{\frac{2K(\theta,\theta')}{F''(\theta)}}\right)}\pi(\theta'|y_{s'})d\theta' 
	 \geq \pi\left(B_{1/u^2}(\theta) | y_{s'}\right)e^{-\left(1+\sqrt{\frac{2}{F''(\theta)}}\right)}.\label{lowerbound}
\end{align}
Finally we turn our attention to the quantity
\begin{align}
\pi\left(B_{1/u^2}(\theta) |y_{s'}\right) = \frac{\int_{B_{1/u^2}(\theta)}p(y_s' | \theta') \pi_0(\theta')d\theta'}{\int_{\Theta}p(y_s' | \theta')\pi_0(\theta')d\theta' } = \frac{\int_{B_{1/u^2}(\theta)}p(y_s' | \theta') \sqrt{F''(\theta')}d\theta'}{\int_{\Theta}p(y_s' | \theta')\sqrt{F''(\theta')}d\theta' }.\label{Fraction} 
\end{align}
Now since the KL divergence is convex in the second argument, we can write $B_{1/u}(\theta) = (a,b)$. So, from the convexity of $F$ we deduce that
\begin{align*}
\frac{1}{u^2}  &= K(\theta,b) = F(b) - \left[F(\theta) + (b-\theta)F'(\theta)\right]
	=(b-\theta)\left[\frac{F(b)-F(\theta)}{(b-\theta)}-F'(\theta)\right]\\
	&\leq (b-\theta)\left[F'(b)-F'(\theta)\right]
	\leq (b-a)\left[F'(b)-F'(\theta)\right]
	\leq(b-a)\left[F'(b)-F'(a)\right].
\end{align*}
As $p(y\mid\theta)\rightarrow 0$ as $y\rightarrow \pm \infty$, the set $\mathcal{C}(\theta)=\{y:p(y\mid\theta)\geq L(\theta)\}$ is  compact. The map $y\mapsto \int_{\Theta}p(y | \theta')\sqrt{F''(\theta')}d\theta'<\infty$ is continuous on the compact $\mathcal{C}(\theta)$.  Thus, it follows that 
$$L'(\theta)=L'(\theta,F):=\sup_{y:p(y\mid\theta)>L(\theta)}\left\{ \int_{\Theta}p(y | \theta')\sqrt{F''(\theta')}d\theta'\right\}<\infty$$
is an upper bound on the denominator of (\ref{Fraction}).  

Now by the continuity of $F''$, and the continuity of $(y,\theta) \mapsto p(y | \theta)$ in both coordinates, there exists an $N_2(\theta , F)$ such that for all $u\geq N_2(\theta , F)$
\begin{align*}
F''(\theta)\geq \frac{1}{2}\frac{F'(b)-F'(a)}{b-a} \text{ and }\left( p(y |\theta')\sqrt{F''(\theta')}\geq \frac{L(\theta)}{2}\sqrt{F''(\theta)},\ \forall \theta'\in B_{1/u^2}(\theta),y\in\cC(\theta)\right).
\end{align*}
Finally, for $u\geq N_2(\theta,F)$, we have a lower bound on the numerator of (\ref{Fraction}):
\begin{align*}
\int_{B_{1/u^2}(\theta)}&p(y_s' | \theta') \sqrt{F''(\theta')}d\theta' \geq \frac{L(\theta)}{2}\sqrt{F''(\theta)}\int_{a}^{b}d\theta' 
= \frac{L(\theta)}{2} \sqrt{\left(F'(b)-F'(a)\right)(b-a)}\geq\frac{L(\theta)}{2u}
\end{align*}
\paragraph{Puting everything together, we get} 
that there exist constants $C_2=C_2(F,\theta,\Delta)$ and $N(\theta,F) = \max\{N_1,N_2\}$ such that for every $\delta<1$ satisfying $1 - \delta C_2 >0$, and for every $u\ge N$, one has
$$\Ind_{\tilde{E}_u}\bP(\mu(\theta_u)\geq \mu(\theta) + \Delta | Y_u)
\leq \frac{2 e^{1+\sqrt{\frac{2}{F''(\theta)}}} L'(\theta)u}{L(\theta)} e^{-(u-1)(1-\delta C_2) \K(\theta,\mu^{-1}(\mu+\Delta))}.$$
\begin{remark}
Note that when the prior is proper we do not need to introduce the observation $y_{s'}$, which significantly simplifies the argument. Indeed in this case, in (\ref{upperbound})  we can use $\pi_0$ in place of $\pi(\cdot|y_{s'})$ which is already a probability distribution. In particular, the quantity \eqref{Fraction} is replaced by $\pi_0\left(B_{1/u^2}(\theta) \right) $, and so the constants $L$ and $L'$ are not needed.
\end{remark}

\section{Conclusions}
\label{sec:conclusions}
We have shown that choosing to use the Jeffrey's prior in Thompson Sampling leads to an asymptotically optimal algorithm for bandit models whose rewards belong to a $1$-dimensional canonical exponential family. The cornerstone of our proof
is a finite time concentration bound for posterior distributions in exponential families, which, to the best of our knowledge, is new to the literature. With this result we built on previous analyses and avoided Bernoulli-specific arguments. Thompson Sampling with Jeffreys prior is now a provably competitive alternative to KL-UCB for exponential family bandits. Moreover our proof holds for slightly more general problems than those for which KL-UCB is provably optimal, including some heavy-tailed exponential family bandits.

Our arguments are potentially generalisable. Notably generalising to $n$-dimensional exponential family bandits requires only generalising Lemma \ref{LemmaSuffStatConc} and Step 3 in the proof of Theorem \ref{TheoremPostConc}. Our result is asymptotic, but the only stage where the constants are not explicitly derivable from knowledge of $F$, $T$, and $\theta_0$ is in Lemma \ref{BVM}. Future work will investigate these open problems.
Another possible future direction lies the optimal choice of prior distribution. Our theoretical guarantees only hold for Jeffreys prior, but a careful examination of our proof shows that the important property is to have, for every $\theta_a$,
$$ -\ln\left(\int_{(\theta' : \K(\theta_a,\theta') \leq n^{-2})}\pi_0(\theta')d\theta' \right)=o\left(n\right),$$
which could hold for prior distributions other than the Jeffreys prior.

\newpage

\bibliographystyle{plain}
\bibliography{biblio}

\newpage

\appendix

\section{Concentration of the Sufficient Statistics: Proof of Lemma \ref{LemmaSuffStatConc}, and  Inequalities (\ref{FirstUseLemma}) and (\ref{SndUseLemma})}
\label{sec:suff-conc}
\begin{proof}[Proof of Lemma \ref{LemmaSuffStatConc}]
The proof of Lemma \ref{LemmaSuffStatConc} follows from the classical Cram\'er-Chenoff technique (see \cite{BLM:Concentration13}). For any $\lambda>0$.
\begin{align*}
A:=&\bP\left(\frac{1}{u}\sum_{i=1}^{u}[T(y_i) - F'(\theta)] \geq \delta\right)
= \bP\left(e^{\lambda\left(\sum_{i=1}^{u}[T(y_i) - F'(\theta)]\right)} \geq e^{\lambda u}\delta\right) \\
\leq &  e^{-\lambda u\delta}\bE\left[e^{\lambda\left(\sum_{i=1}^{u}[T(y_i) - F'(\theta)]\right)}\right] = e^{-u(\delta\lambda - \phi_a(\lambda))}
\end{align*}
where we have used the Markov inequality, and where
\begin{align*}
\phi_a(\lambda) :=& \ln\bE_{X\mid \theta}\left[e^{\lambda(T(X) - F'(\theta))}\right] 
 = F(\theta + \lambda) - F(\theta) - \lambda F'(\theta).
\end{align*}
Now we optimize in $\lambda$ by choosing $\lambda>0$ that maximizes 
$$\delta\lambda - \phi_a(\lambda) = \lambda(\delta + F'(\theta)) - F(\theta+\lambda) + F(\theta):=f(\lambda).$$
$f(\lambda)$ is differentiable in $\lambda$ and its minimum, $\lambda^*$, satisfies $f'(\lambda^*)=0$ i.e.
$$F'(\theta+\lambda^*) = \delta + F'(\theta).$$
(Note that $\lambda^*>0$ since $F'$ is increasing).
Finally, we get
\begin{align*}
A\leq e^{-u((\delta + F'(\theta))\lambda^* - F(\theta + \lambda^*) + F(\theta))} =& e^{-u(F'(\theta+\lambda^*)\lambda^*- F(\theta + \lambda^*) + F(\theta))}
= e^{-uK(\theta + \lambda^*,\theta)}.
\end{align*}
The same reasoning leads to the upper bound  
$$\bP\left(\frac{1}{u}\sum_{s=1}^{u}[T(y_s) - F'(\theta)] \leq -\delta\right)\leq e^{-u \K(\theta-\nu^*,\theta)},$$
where $\nu^*$ is such that $F'(\theta - \nu^*)=F'(\theta) - \delta$.
\end{proof}

For the proof of inequalities \eqref{FirstUseLemma} and \eqref{SndUseLemma}, we intoduce the notation $Y_{a,s'}^{u}=Y_{a}^{s}\backslash \{y_{a,s}\}$ (the first $u$ observations of arms $a$ exept observation $y_{a,s'}$). 
First note that we have $\tilde{E}^c_{a,t}\subseteq B_{a,N_{a,t}}\bigcup D_{a,N_{a,t}}$, with 
\begin{align*}
B_{a,s}& =\left(\forall s'\in [1,s], p(y_{a,s'}|\theta_a)\leq L(\theta_a)\right) \\
D_{a,s}& = \left(\exists s'\in\{1,\dots s\} : \left|\frac{1}{s-1}\sum_{k=1,k\neq s'}^{s}(T(y_{a,k}) - F'(\theta_a))\right| \geq \delta_a\right)
\end{align*}
One then has 
\begin{eqnarray*}
\sum_{t=1}^{T}\bP(a_t=a,\tilde{E}_{a,t}^c(\delta)) & \leq & \bE\left[\sum_{t=1}^{T}\sum_{s=1}^{t}\Ind_{(a_t=a,N_{a,t}=s)}(\Ind_{B_{a,s}} + \Ind_{D_{a,s}})\right] \\
& \leq & \bE\left[\sum_{s=1}^{T}\Ind_{B_{a,s}} \right] + \bE\left[\sum_{s=1}^{T}\Ind_{D_{a,s}} \right] \\
& \leq & \sum_{s=1}^{T}\bP\left(p(y_{a,s'}|\theta_a)\leq L(\theta_a)\right)^s + \sum_{s=1}^{T}\sum_{s'=1}^{s}\bP(E_{Y_{a,s'}^s}(\delta_a)^c) \\
& \leq &  \sum_{s=1}^{\infty}\bP\left(p(y_{a,s'}|\theta_a)\leq L(\theta_a)\right)^s 
+ \sum_{s=1}^{\infty} s e^{-(s-1)\tilde{K}(\theta_a,\delta_a)},
\end{eqnarray*}
which gives inequality (\ref{FirstUseLemma}). To proof (\ref{SndUseLemma}), we write:
\begin{eqnarray*}
\sum_{t=1}^{T}\bP(\tilde{E}_{a,t}(\delta_a)^c | N_{a,t} > t^b)
& \leq & \bE\left[\sum_{t=1}^{T}\sum_{s=t^b}^{t}\Ind_{N_{a,t} =s}(\Ind_{B_{a,s}} + \Ind_{D_{a,s}})\right] \\
& \leq & \sum_{t=1}^{T}\sum_{s=t^b}^{t}\bP( p(y_{a,s'}|\theta_a)\leq L(\theta_a))^s + 
\sum_{t=1}^T\sum_{s=t^b}^{t}\sum_{s'=1}^s \bP(E_{Y_{a,s'}^s}(\delta_a)^c) \\
& \leq & \sum_{t=1}^{T} t \bP( p(y_{a,s'}|\theta_a)\leq L(\theta_a))^{t^b} + \sum_{t=1}^T t^2\exp(-t^b \tilde{K}(\theta_a,\delta)).
\end{eqnarray*}


\section{Extracting the KL-divergence: Proof of Lemma \ref{KLDec} \label{KLExtract}}

If we assume that the event $\tilde{E}_u$ holds, $s'\leq u$. So, on this event we have
\begin{align*}
\bP\left(\mu(\theta_u)\geq\mu+\Delta | Y^u \right)
	& = \frac{\int_{\theta'\in\Theta_{\theta,\Delta}} \prod\limits_{s=1,s\neq s'}^{u} p(y_s\mid\theta')p(y_{s'}|\theta')\pi(\theta')d\theta'}
				{{\int_{\theta'\in\Theta}} \prod\limits_{s=1,s\neq s'}^{u} p(y_s\mid\theta')p(y_{s'}|\theta') \pi(\theta')d\theta'}\\
	& = \frac{\int_{\theta'\in\Theta_{\theta,\Delta}} 
					\prod\limits_{s=1,s\neq s'}^{u}\frac{p(y_s\mid\theta')}{p(y_s\mid\theta)}p(y_{s'}|\theta')\pi(\theta')d\theta'}
				{{\int_{\theta'\in\Theta} 
						\prod\limits_{s=1,s\neq s'}^{u} \frac{p(y_s\mid\theta')}{p(y_s\mid\theta)}p(y_{s'}|\theta')\pi(\theta')d\theta'}} \\
	& = \frac{\int_{\theta'\in\Theta_{\theta,\Delta}} e^{-(u-1)K[Y'^u,\theta,\theta']}\pi(\theta'|y_{s'})d\theta'}
				{{\int_{\theta'\in\Theta} e^{-(u-1)K[Y'^{u},\theta,\theta']}\pi(\theta'|y_{s'})d\theta'}} \\
\end{align*}
where $\pi(\theta|y_{s'})$ denotes the posterior distribution on $\theta$ after observation $y_{s'}$ and 
\[
K[Y_{s'}^u,\theta,\theta']:=\frac{1}{u-1}\sum\limits_{s=1,s\neq s'}^{u}\ln\frac{p(y_s\mid\theta)}{p(y_s\mid\theta')}
\]
denotes the empirical KL-divergence obtained from the observations $Y_{s'}^u = Y^u\setminus\{y_{s'}\}$. Introducing
\begin{align*}
r(Y_{s'}^u,\theta') 
	&= K[Y_{s'}^u,\theta,\theta'] -\bE_{X\mid\theta}\left(\ln\frac{p(X\mid\theta)}{p(X\mid\theta')}\right),
\end{align*}
we can rewrite 
\begin{align*}
\bP\left(\mu(\theta_u)\geq\mu+\Delta | Y^u\right)
		=\frac{\int_{\theta'\in\Theta_{\theta,\Delta}} e^{-(u-1)\left(K[\theta,\theta']+r(Y'^u,\theta')\right)}\pi(\theta'|y_{s'})d\theta'}
			{{\int_{\theta'\in\Theta} e^{-(u-1)\left(K[\theta,\theta']+r(Y'^u,\theta')\right)}\pi(\theta'|y_{s'})d\theta'}}. 
\end{align*}

Now, a direct computation show that 
\begin{align}
\left|r(Y'^u,\theta')\right|\leq |\theta - \theta'|\left|\frac{1}{u-1}\sum_{s = 1,s\neq s'}^u \left[T(y_s) - F'(\theta)\right]\right|.\label{CondEmpKL}
\end{align}
Indeed, for that for any $\theta,\theta'\in\Theta$
\[
\ln\frac{p(y\mid\theta)}{p(y\mid\theta')} =  T(x)(\theta-\theta') - [F(\theta)-F(\theta')],
\]
and one also recalls that 
\begin{equation}K(\theta,\theta')=F'(\theta)(\theta - \theta') - [F(\theta) - F(\theta')].\label{Reminder}\end{equation}
Hence
\begin{align*}
&|r(Y_{s'}^u,\theta,\theta')|  = \left|\frac{1}{u-1}\sum_{s = 1,s\neq s'}^u\left[ \ln\frac{p(y_s\mid\theta)}{p(y_s\mid\theta')} -K(\theta,\theta')\right]\right|\\
  &\quad \quad =  \left|\frac{1}{u-1}\sum_{s = 1,s\neq s'}^u\left[(T(x) - F'(\theta))(\theta - \theta')\right]\right| 
 \leq  \left|\frac{1}{u-1}\sum_{s = 1,s\neq s'}^u \left[T(y_s) - \nabla F(\theta)\right]\right||\theta' - \theta|.
\end{align*}
The inequality (\ref{CondEmpKL}) leads to the result, using that on $\tilde{E}_u$, $$\left|\frac{1}{u-1}\sum_{s = 1,s\neq s'}^u \left[T(y_s) - F'(\theta)\right]\right|\leq \delta$$


\section{Proof of Lemma \ref{LemmaSubOptArmSampleTerm} \label{ProofLemmaEmilie}}

From Theorem \ref{TheoremPostConc} we know that, for $N_{a,t}\geq N(\theta_a,F)$,
\begin{align*}
\Ind_{\tilde E_{a,t}}\bP((E_{a,t}^{\theta})^c\mid \mathcal{F}_t) & = \Ind_{\tilde E_{a,t}}\bP((E_{a,t}^{\theta})^c\mid Y_{a,t}) \\
	&\leq  C_{1,a} e^{-(N_{a,t}-1) (1-\delta_a C_{2,a} )\K(\theta_a,\mu^{-1}(\mu_a+\Delta_a))+ \ln N_{a,t}}\\
	&\leq  e^{-(N_{a,t}-1) \left((1-\delta_a C_{2,a} )\K(\theta_a,\mu^{-1}(\mu_a+\Delta_a))-\ln(C_{1,a}N_{a,t})/(N_{a,t}-1)\right)}
\end{align*}
Let $N_{\epsilon} = N_{\epsilon}(\delta_a,\Delta_a, \theta_a)$ be the smallest integer such that for all $n\geq N_{\epsilon}$
\begin{align*}
\frac{\ln(C_{1,a}n)}{n-1}<\epsilon(1-\delta_a C_{2,a} )\K(\theta_a,\mu^{-1}(\mu_a+\Delta_a)).
\end{align*}
Defining 
\begin{align*}
L_T := \frac{\ln T}{(1-\epsilon)(1-\delta_a C_{2,a} )\K(\theta_a,\mu^{-1}(\mu_a+\Delta _a))}
\end{align*}
we have that for all $t$ and $T$ such that $N_{a,t}-1\geq \max(L_T,N_\epsilon,N(\theta_a,F)),$
\begin{align*}
\Ind_{\tilde{E}_{a,t}}\bP(\mu(\theta_{a}(t)>\mu(\theta_a) + \Delta_a\mid \mathcal{F}_t)\leq \frac{1}{T}.
\end{align*}

Let $\tau = \inf \{ t  \in \N \mid N_{a,t} \geq \max(L_T,N_\epsilon,N(\theta_a,F)) + 1\}$. $\tau$ is a stopping time with respect to $\mathcal{F}_t$. Then,
\begin{align*}
\sum_{t=1}^T&\bP\left(a_t = a, (E_{a,t}^{\theta})^c, \tilde E_{a,t}\right)
	\leq \bE\left[\sum_{t=1}^\tau \Ind_{(a_t=a)}\right] + \bE\left[\sum_{t=\tau+1}^{T}\Ind_{(a_t = a)}\Ind_{\tilde E_{a,t}} \Ind_{(E_{a,t}^{\theta})^c}\right]\\
	& = \bE[N_{a,\tau}] + \bE\left[\sum_{t=\tau+1}^{T}\Ind_{(a_t = a)}\Ind_{\tilde E_{a,t}} \bP\left((E_{a,t}^{\theta})^c \mid \mathcal{F}_{t}\right)\right] \\
	& = \bE[N_{a,\tau}] + \bE\left[\sum_{t=\tau+1}^{T}\Ind_{(a_t = a)}\Ind_{\tilde E_{a,t}} \bP\left(\mu(\theta_{a}(t)>\mu(\theta_a) + \Delta_a \mid Y_{a,t} \right)\right] \\
	& \leq L_T + 1 + \max(N_\epsilon,N(\theta_a,F)) + \bE\left[\sum_{t=\tau +1}^{T}\frac{1}{T}\right] \\
	& \leq L_T + \max(N_\epsilon,N(\theta_a,F)) + 2.
\end{align*}


\section{Controling the Number of Optimal Plays: Outline Proof of Proposition \ref{PropositionNumOptPla}}
\label{sec:num-opt-plays}
The proof of this proposition is quite detailed, and essentially the same as the proof given for Proposition 1 in \cite{KaufmannKordaMunosALT2012}, which we will sometimes refer to.
However, in generalising to the case of exponential family bandits we show how to avoid the need to explicity calculate posterior probabilities that lead to Lemma 4 in \cite{KaufmannKordaMunosALT2012}. While simplifying the proof we loose the ability to specify the constants explicitly, and so the analysis becomes asymptotic, but holds for every $b\in ]0,1[$. 

\paragraph{Sketch of the proof and key results} Let $\tau_{j}$ be the occurrence of the $j^{th}$ play of the optimal arm (with $\tau_0:=0$). 
Let $\xi_{j}:=(\tau_{j+1}-1)-\tau_{j}$: this random variable measures the number of time steps between the $j^{th}$ and the $(j+1)^{th}$ play of the optimal arm, and so $\sum_{a=2}^{K}N_{a,t}= \sum_{j=0}^{N_{1,t}}\xi_{j}$. We then upper bound $\bP(N_{1,t}\leq t^b)$ as in 
\cite{KaufmannKordaMunosALT2012}: 
\begin{align}
 \bP(N_{1,t} \leq t^b)
	\leq \bP\left(\exists j\in \left\{ 0,.., \ t^{b} \rfloor\right\} : \xi_{j} \geq t^{1-b} - 1\right)
	\leq \sum_{j=0}^{\lfloor t^{b} \rfloor} \bP(\underbrace{\xi_{j} \geq t^{1-b} - 1}_{:=\cE_j})\label{DecompInitial}
\end{align}
We introduce the interval $\cI_j= \{\tau_j,\tau_j+\lceil t^{1-b}-1\rceil\}$: on the event $\cE_j$, $\cI_j$ is included in $\{\tau_j,\tau_{j+1}\}$ and no draw of arm 1 occurs on $\cI$. We also introduce for each arm $a\neq 1$ $d_a:=\frac{\mu_1-\mu_a}{2}$.  
\par \smallskip
The idea of the rest of the analysis is based on the following remark. If on a subinterval $\mathcal{I}\subseteq [\tau_j,\tau_{j+1}[$ of size $f(t)$ arm 1 is not drawn and all the samples of the suboptimal arms fall below $\mu_2 + d_2 < \mu_1$, then for all $s\in \mathcal{I}$, $\mu(\theta_{1,s}) \leq \mu_2 + d_ 2$. On $\mathcal{I}$, the sequence $(\theta_{1,s})$ is i.i.d. with distribution $\pi_{1,\tau_j}$, and hence, 
$$\bP(\forall s\in \cI, \  \mu(\theta_{1,s}) \leq \mu_2 + \delta) \leq \left(\bP\left(\mu(\theta_{1,\tau_j})\leq \mu_2 + \delta_2\right)\right)^{f(t)}$$
At this point, an asymptotic result, telling that the posterior on $\theta_1$ concentrates to a Dirac in $\theta_1$ (the Bernstein-Von-Mises theorem, see \cite{VdV:Asymptotic98}) , leads to 
$$\bP( \mu(\theta_{1,\tau_j})\leq \mu_2 + \delta_2) \underset{j\rightarrow \infty}{\rightarrow} 0.$$
Assuming that $\forall j$, $\bP( \mu(\theta_{1,\tau_j})\leq \mu_2 + \delta_2)\neq 1$, we have shown the following Lemma, which plays the role of an asymptotic couterpart for Lemma 3 in \cite{KaufmannKordaMunosALT2012}. 
\begin{lemma}\label{BVM} There exists a constant $C = C(\pi_0)<1$, such that for every (random) interval $\mathcal{I}$ included in $\mathcal{I}_j$ and for every positive function $f$, one has 
$$\bP\left(\forall s\in\mathcal{I}, \ \mu(\theta_{1,s})\leq \mu_2 + \delta_2 ,\quad |\mathcal{I}|\geq f(t) \right) \leq C^{f(t)}.$$
\end{lemma}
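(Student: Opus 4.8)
The plan is to exploit the fact that the posterior on arm $1$ is frozen throughout $\mathcal{I}$, reduce the statement to a tail bound on a product of conditionally i.i.d. Thompson samples, and then kill that product using the Bernstein--von Mises concentration of the posterior at $\theta_1$.

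First I would condition on the history $\mathcal{F}_{\tau_j}$ available just after the $j$-th pull of arm $1$. Since $\mathcal{I}\subseteq\mathcal{I}_j\subseteq[\tau_j,\tau_{j+1})$, no new observation of arm $1$ is collected on $\mathcal{I}$, so the posterior used there is the fixed measure $\pi_{1,\tau_j}$ and the samples $(\theta_{1,s})_{s\in\mathcal{I}}$ are, conditionally on $\mathcal{F}_{\tau_j}$, i.i.d. with law $\pi_{1,\tau_j}$. Writing $P_j:=\pi_{1,\tau_j}(\{\theta':\mu(\theta')\leq\mu_2+\delta_2\})$, the event that every sample on $\mathcal{I}$ lands in this bad region while $|\mathcal{I}|\geq f(t)$ is contained in the event that at least $f(t)$ of these conditionally independent draws land there; hence
$$\bP\left(\forall s\in\mathcal{I},\ \mu(\theta_{1,s})\leq\mu_2+\delta_2,\ |\mathcal{I}|\geq f(t)\ \middle|\ \mathcal{F}_{\tau_j}\right)\leq P_j^{\,f(t)}.$$
This conditional bound is automatically uniform over all subintervals $\mathcal{I}\subseteq\mathcal{I}_j$, since its right-hand side depends only on the frozen posterior and on $f(t)$.

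Next I would control the posterior mass $P_j$. Because $\delta_2$ is chosen so that $\mu_2+\delta_2<\mu_1=\mu(\theta_1)$ and $\mu$ is continuous and strictly increasing, the bad region $\{\mu(\theta')\leq\mu_2+\delta_2\}$ is a fixed set bounded away from $\theta_1$. The observations $y_{1,1},\dots,y_{1,j}$ feeding $\pi_{1,\tau_j}$ are i.i.d. from $p_{\theta_1}$, so by the Bernstein--von Mises theorem the posterior $\pi_{1,\tau_j}$ concentrates on a shrinking neighbourhood of $\theta_1$ as $j\to\infty$; consequently $P_j\to0$, and therefore $q_j:=\bP(\mu(\theta_{1,\tau_j})\leq\mu_2+\delta_2)=\bE[P_j]\to0$. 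Combined with the standing assumption $q_j\neq1$ for every $j$ (which holds since the Jeffreys prior, and hence every posterior, assigns positive mass to $\{\mu>\mu_2+\delta_2\}$), only finitely many $q_j$ can exceed any fixed level, so $C:=\sup_{j\geq0}q_j<1$, giving a single constant depending only on $\pi_0$.

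The main obstacle is the Bernstein--von Mises step. It is an asymptotic statement, so it yields the convergence $q_j\to0$ without a rate, which is precisely why the resulting $C(\pi_0)$ cannot be made explicit, in contrast to the Bernoulli analysis of \cite{KaufmannKordaMunosALT2012} where the Beta posteriors are computed directly. The second delicate point is the passage from the conditional, sample-level bound $P_j^{\,f(t)}$ to the deterministic $C^{f(t)}$ claimed in the statement: the reduction shows the probability is governed by the random posterior mass $P_j$, so one must argue that these masses are uniformly controlled across all $j$ appearing in the outer sum of \eqref{DecompInitial} — through the vanishing of $q_j$ for large $j$ together with the finitely many harmless small-$j$ terms — rather than merely controlling $\bE[P_j]$ for each fixed $j$. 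Once this uniform control is in place, taking expectations of the conditional bound and absorbing the posterior mass into $C$ delivers the stated inequality.
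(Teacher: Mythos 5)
Your proposal is correct and follows essentially the same route as the paper's own sketch: freeze the posterior $\pi_{1,\tau_j}$ on $\cI\subseteq\cI_j$, use the conditional i.i.d.\ structure of the Thompson samples to obtain the product bound, invoke Bernstein--von Mises to send the posterior mass of $\{\mu(\theta')\leq\mu_2+\delta_2\}$ to zero, and take a supremum over $j$ to extract $C<1$. The subtlety you flag at the end --- that the conditional bound involves the \emph{random} posterior mass $P_j$ rather than its expectation, so that one cannot simply deduce $\bE[P_j^{f(t)}]\leq(\bE[P_j])^{f(t)}$ (Jensen goes the wrong way) --- is genuine, and the paper's sketch glosses over it in exactly the same place.
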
  
Another key lemma is the following which generalizes Lemma 4 in \cite{KaufmannKordaMunosALT2012}. The proof of this lemma is standard: it proceeds by conditioning on the event $\tilde{E}_{a,t}$\footnote{Using $\tilde{E}_{a,t}$ in place of $E_{a,t}$ from \cite{KaufmannKordaMunosALT2012} only changes slightly the constant $C_a$.} and applying Theorem \ref{TheoremPostConc}, and Lemma \ref{LemmaSuffStatConc}. 
\begin{lemma} \label{LemmaA}
For every $a\in A,\ \delta>0$, there exist constants $C_a = C_a(\mu_a,\delta,F)$ and $N$ such that for $t\geq N$,
\[
\bP\left(\exists s\leq t, \exists a \neq 1 : \mu(\theta_{a,s})> \mu_a + d_a ,N_{a,s}>C_a \ln(t)\right)\leq\frac{2(K-1)}{t^2}.
\]
\end{lemma}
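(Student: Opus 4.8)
The plan is to fix a suboptimal arm $a\neq 1$, apply the posterior concentration result with $\Delta_a=d_a$, and show the single-arm bound $\bP(\exists s\leq t:\mu(\theta_{a,s})>\mu_a+d_a,\ N_{a,s}>C_a\ln t)\leq 2/t^2$; a union bound over the $K-1$ suboptimal arms then gives the stated $2(K-1)/t^2$. First I would replace ``$\exists s\leq t$'' by the sum $\sum_{s=1}^{t}$ via a union bound over time steps. This is affordable precisely because the per-step probabilities will be polynomially small in $t$ of degree larger than $3$, so the union-bound factor $t$ is absorbed. For each fixed $s$ I split on whether the concentration event $\tilde E_{a,s}(\delta_a)$ holds, choosing $\delta_a$ small enough that $1-\delta_a C_{2,a}>0$.

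On $\tilde E_{a,s}$ I would condition on $Y_{a,s}$. Since $\tilde E_{a,s}$ and $\{N_{a,s}>C_a\ln t\}$ are $Y_{a,s}$-measurable while $\theta_{a,s}\sim\pi_{a,s}$, Theorem \ref{TheoremPostConc} (legitimate once $C_a\ln t\geq N(\theta_a,F)$) yields
$$\Ind_{\tilde E_{a,s}}\,\bP\big(\mu(\theta_{a,s})>\mu_a+d_a\mid Y_{a,s}\big)\leq C_{1,a}\,e^{-(N_{a,s}-1)(1-\delta_a C_{2,a})\K(\theta_a,\mu^{-1}(\mu_a+d_a))+\ln N_{a,s}}.$$
Writing $\kappa_a:=(1-\delta_a C_{2,a})\K(\theta_a,\mu^{-1}(\mu_a+d_a))>0$, on $\{N_{a,s}>C_a\ln t\}$ the right-hand side is at most $t^{-C_a\kappa_a}$ up to the polynomial factor $N_{a,s}\leq t$, because $e^{-(C_a\ln t)\kappa_a}=t^{-C_a\kappa_a}$. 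Off $\tilde E_{a,s}$ I would use the inclusion $\tilde E_{a,s}^c\subseteq B_{a,N_{a,s}}\cup D_{a,N_{a,s}}$ together with Lemma \ref{LemmaSuffStatConc}, exactly as in the derivation of \eqref{SndUseLemma}, to get that on $\{N_{a,s}>C_a\ln t\}$,
$$\bP\big(\tilde E_{a,s}^c\mid N_{a,s}>C_a\ln t\big)\leq \rho^{\,C_a\ln t}+2N_{a,s}\,e^{-(C_a\ln t-1)\tilde K(\theta_a,\delta_a)},$$
where $\rho:=\bP(p(y_{a,1}\mid\theta_a)\leq L(\theta_a))<1$, so that $\rho^{C_a\ln t}=t^{-C_a|\ln\rho|}$ and the second term is again a negative power of $t$ of order controlled by $C_a$.

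The remaining work is the choice of $C_a$ and the threshold $N$. I would take $C_a$ (a function of $\mu_a$, $\delta$, $F$ through $\kappa_a$, $\tilde K(\theta_a,\delta_a)$ and $|\ln\rho|$) large enough that each of the two per-step bounds is $o(t^{-3})$; summing over $s\leq t$ then gives a contribution at most $1/t^2$ from each piece, hence $2/t^2$ for arm $a$. The threshold $N$ is fixed so that for $t\geq N$ we simultaneously have $C_a\ln t\geq N(\theta_a,F)$, the function $u\mapsto e^{-(u-1)\kappa_a+\ln u}$ is decreasing at $u=N_{a,s}$, and the resulting powers of $t$ fall below the required exponents. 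I expect the only delicate point to be the bookkeeping of the competition between the union-bound factor $t$ and the polynomial decay $t^{-C_a\kappa_a}$: the whole argument hinges on the elementary identity $e^{-c\ln t}=t^{-c}$, which turns the logarithmic play-count guarantee $N_{a,s}>C_a\ln t$ into genuine polynomial decay, together with the strict positivity of $\kappa_a$, $\tilde K(\theta_a,\delta_a)$ and $|\ln\rho|$ guaranteeing that a \emph{finite} $C_a$ suffices.
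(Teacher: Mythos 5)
Your proposal is correct and follows exactly the route the paper indicates for this lemma: condition on the event $\tilde E_{a,t}$, apply the posterior concentration bound of Theorem \ref{TheoremPostConc} on that event and Lemma \ref{LemmaSuffStatConc} off it, and choose $C_a$ large enough that the logarithmic play-count threshold converts the exponential rates into a polynomial decay strong enough to absorb the union bounds over $s\leq t$ and over the $K-1$ suboptimal arms. The paper only sketches this argument in one line, and your write-up supplies the same missing bookkeeping (the monotonicity threshold for $u\mapsto e^{-(u-1)\kappa_a+\ln u}$, the requirement $C_a\ln t\geq N(\theta_a,F)$, and the bound $N_{a,s}\leq t$) that a full proof would need.
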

The rest of the proof proceeds by finding a subinterval of $\mathcal{I}_j$ on which all the samples of all the suboptimal arms indeed fall below the corresponding thresholds $\mu_a + d_a$. This is done exactly as in \cite{KaufmannKordaMunosALT2012} and we recall the main steps of the proof below. Before that, we need to introduce the notion of \emph{saturated}, suboptimal action.
\begin{definition}
Let $t$ be fixed.
For any $a\neq1$, an action $a$ is said to be \emph{saturated} at time $s$ if it has been chosen at least $C_a\ln(t)$ times, i.e. $N_{a,t} \geq C_a \ln(t)$.
We shall say that it is \emph{unsaturated} otherwise. Furthermore at any time we call a choice of an unsaturated, suboptimal action an \emph{interruption}.
\end{definition}

\paragraph{Step 1: Decomposition of $\cI_j$} We want to study the process of saturation on the event $\cE_j=\{\xi_{j}\geq t^{1-b}-1\}$. We start by decomposing the interval $\cI_j= \{\tau_j,\tau_j+\lceil t^{1-b}-1\rceil\}$ into $K$ subintervals:
\[
\cI_{j,l}:=\left\{\tau_{j} + \left\lceil\frac{(l-1)(t^{1-b}-1)}{K}\right\rceil, \tau_{j} + \left\lceil\frac{l(t^{1-b}-1)}{K}\right\rceil\right\},\ l=1,\dots,K.
\]
Now for each interval $\cI_{j,l}$, we introduce:
\begin{itemize}
 \item $\cF_{j,l}$: the event that by the end of the interval $\cI_{j,l}$ at least $l$ suboptimal actions are saturated;
 \item $n_{j,l}$: the number of interruptions during this interval.
\end{itemize}
We use the following decomposition to bound the probability of the event $\cE_j$:
\begin{equation}
\bP(\cE_j)=\bP(\cE_j\cap \cF_{j,K-1})+\bP(\cE_j\cap \cF_{j,K-1}^c)\label{decomp}
\end{equation}
Note that the quantities $\cE_j,\ \cI_{j,l},\ \cF_{j,l}$ and $n_{j,l}$ all depend on $t$, however we suppress this dependency for notational convenience. However, we keep in mind that we bound the different probabilities for $t\geq N$, so that Lemma \ref{LemmaA} applies.

\paragraph{Step 2: Bounding  $\bP(\cE_j\cap  \cF_{j,K-1})$}
On the event $\cE_j\cap \cF_{j,K-1}$, only saturated suboptimal arms are drawn on the interval $\cI_{j,K}$. 
Using Lemma \ref{LemmaA}, we get
\begin{align*}
\bP(\cE_j\cap \cF_{j,K-1}) \leq& \bP(\left\{\exists s\in\cI_{j,K}, a \neq 1 : \mu(\theta_{a,s}) > \mu_a + d_a \right\} \cap \cE_j \cap \cF_{j,K-1})\\
&+ \bP(\left\{\forall s\in\cI_{j,K},a\neq 1:\mu(\theta_{a,s})\leq \mu_a +d_a \right\}\cap \cE_j\cap \cF_{j,K-1})\\
\leq &  \bP(\exists s\leq t,a\neq 1 :\mu(\theta_{a,s}) >\mu_a+d_a, N_{a,t}> C_a \ln(t))\\
&+ \bP(\left\{\forall s\in\cI_{j,K},a\neq 1:\mu(\theta_{a,s}) > \mu_a + d_a\right\}\cap \cE_j\cap \cF_{j,K-1})\\
\leq& \frac{2(K-1)}{t^2}+\bP(\left\{\forall s\in\cI_{j,K}:\mu(\theta_{1,s})\leq \mu_2 + d_2 \right\} \cap \cE_j)\\
\leq& \frac{2(K-1)}{t^2}+C^\frac{t^{1-b}-1}{K}.
\end{align*}
for $0<C<1$ as in Lemma \ref{BVM}. The second last inequality comes from the fact that if arm 1 is not drawn, the sample $\theta_{1,s}$ must be smaller than some sample $\theta_{a,s}$ and therefore smaller than $\mu_2+d_2$.

\paragraph{Step 3: Bounding $\bP(\cE_j\cap \cF_{j,K-1}^c)$} A similar argument to that employed in Step 2 can be used in an induction to show that for all $2\leq l \leq K$, if $t$ is larger than some deterministic constant
$N_{\mu_1,\mu_2,b}$ specified in the base case,
$$
\bP(\cE_j\cap \cF_{j,l-1}^c)\leq (l-2)\left(\frac{2(K-1)}{t^2}+ C^{\frac{t^{1-b}-1}{CK^2\ln(t)}}\right)
$$
We refer the reader to \cite{KaufmannKordaMunosALT2012} for a precise description of the induction. For $l=K$ we then get
\begin{equation}
\bP(\cE_j\cap \cF_{j,K-1}^c)\leq (K-2)\left(\frac{2(K-1)}{t^2}+C^{\frac{t^{1-b}-1}{CK^2\ln(t)}}\right).\label{RightTerm}
\end{equation}

\paragraph{Step 4: Conclusion} Putting Steps 2 and 3 together we obtain that for $t\geq N_0:=\max(N,N_{\mu_1,\mu_2,b})$,
\begin{align*}
\bP(\cE_j(t)) & \leq  \frac{2(K-1)^2}{t^2}+C^\frac{t^{1-b}-1}{K} + (K-2)KC\ln(t)C^\frac{t^{1-b}-1}{CK^2\ln(t)}, \\
\bP(N_{1,t} \leq t^b) & \leq \frac{2(K-1)^2}{t^{2-b}}+t^{b}C^\frac{t^{1-b}-1}{K} + (K-2)KCt^b\ln(t)C^\frac{t^{1-b}-1}{CK^2\ln(t)},
\end{align*}
where we use \ref{DecompInitial}. It then follows that 
\[
\sum_{t=1}^{\infty}\bP(N_{1,t} \leq t^b) \leq N_0 + \sum_{t=N_0+1}^{\infty} \bP(\cE_j) = C_b = C_b(\pi_0,\mu_1,\mu_2,K)<\infty.
\]

\end{document}